\def\set@curr@file#1{\def\@curr@file{#1}}
\title[Bridging Ordinary-Label Learning and Complementary-Label Learning]{Bridging Ordinary-Label Learning and Complementary-Label Learning}
\author{\Name{Yasuhiro Katsura} \Email{katsura\_ani@ruri.waseda.jp}\\
\Name{Masato Uchida}\\
\addr Waseda University, Japan}
\begin{document}

\maketitle

\begin{abstract}
A supervised learning framework has been proposed for the situation where each training data is provided with a \textit{complementary label} that represents a class to which the pattern does not belong.
In the existing literature, \textit{complementary-label learning} has been studied independently from \textit{ordinary-label learning}, which assumes that each training data is provided with a label representing the class to which the pattern belongs.
However, providing a complementary label should be treated as equivalent to providing the rest of all the labels as the candidates of the one true class.
In this paper, we focus on the fact that the loss functions for one-versus-all and pairwise classification corresponding to ordinary-label learning and complementary-label learning satisfy certain \textit{additivity} and \textit{duality}, and provide a framework which directly bridge those existing supervised learning frameworks.
Further, we derive classification risk and error bound for any loss functions which satisfy additivity and duality.
\end{abstract}
\begin{keywords}
statistical learning theory, supervised classification
\end{keywords}
\section{Introduction}\label{sec:introduction}

In the most common problem setting in supervised learning, the class to which the training data belongs is provided as a label, namely an \textit{ordinary label}.
There is increasing interest in generalizing the concept of ordinary labels \cite{chapelle2010mitpress, Jie2010nips, cour2011jmlr, natarajan2013nips, ishida2018nips}.
In this paper, we consider a problem setting where each training data is provided with a label, that is, a \textit{complementary label}, which specifies \textit{one} class that the pattern does not belong to.
The ordinary and complementary labels respectively reflect the information regarding a class to which the training data is
likely and unlikely to belong.

The learning framework was first proposed by \cite{ishida2017nips}.
Assuming that a single complementary label is provided to each training data, the corresponding loss function considering one-versus-all and pairwise classification was defined, and also the classification risk and error were analyzed.
The classification risk studied by \cite{ishida2019icml} depends on a a more generalized loss function where the classification strategy is not necessarily restricted to be one-versus-all or pairwise classification.

Although choosing the correct class of a pattern involves considerable work for the annotators, the use of complementary labels can alleviate the annotation cost.
However, in this framework, loss of generalities can occur if annotators choose only \textit{one} class as a complementary label.
In other words, it is reasonably assumed that multiple labels can be chosen as classes to which a pattern does not belong.

\cite{Cao2020MultiComplementaryAU,Feng2020LearningFM} analyzed a framework where multiple complementary labels are provided to each training data.
\cite{Cao2020MultiComplementaryAU} formulated the number of complementary labels as a constant value, while \cite{Feng2020LearningFM} formulated that as a random variable.
In these works, the existing framework in \cite{ishida2019icml} where single complementary label is provided to each training data is directly applied.
Therefore, there is no essential difference with the framework of \cite{ishida2019icml} on the structure and properties of the loss functions or the derivation of classification risk and error.

In contrast, this paper focuses on equivalency of ordinary label and complementary label and provides essential generalization of learning framework from single ordinary/comple-mentary label.
The finding that loss functions for one-versus-all and pairwise classification satisfy certain \textit{additivity} and \textit{duality} plays a significant role in the generalization, and the classification risk and error bound with a new expression not found in \cite{ishida2019icml,Cao2020MultiComplementaryAU,Feng2020LearningFM} are derived.
This enables us to bridge ordinary-label learning and complementary-label learning and to understand them from a unified perspective.
To be more specific, the introduced loss functions satisfying \textit{additivity} and \textit{duality} allow a straightforward comparison of the proposed approach and those shown in the existing literature.
The properties also allow our classification risk to be in a more simple form than that of \cite{ishida2017nips, ishida2019icml,Cao2020MultiComplementaryAU, Feng2020LearningFM}.
Further, the derived classification error monotonically decreases corresponding to the number of complementary labels.
This property is also shown in the experiment, which supports that our framework bridges learning from ordinary/complementary labels.

The remaining paper is organized as follows.
Section \ref{sec:related-work} provides a review of the existing work regarding the generalization of supervised learning from ordinary labels.
Section \ref{sec:background} introduces several key formulations pertaining to ordinary-label and complementary-label learning.
Section \ref{sec:formulation} presents the generation probability model of data provided with multiple complementary labels and the loss functions for the learning conducted from such data.
Furthermore, this section describes the natural generalization of the proposed loss function from those for one-versus-all and pairwise classification defined by \cite{ishida2017nips} and the evaluation of the classification risk.
The error bound of the one-versus-all and pairwise classification is derived in Section \ref{sec:errbound}.
Section \ref{sec:experiment} describes the experimental investigation performed considering real-world data to validate the classification error, and Section \ref{sec:conclusion} presents the conclusions.

\section{Related Work}\label{sec:related-work}
This section provides a review of the existing work regarding the generalization of supervised learning from the ordinary labels provided to training data.

\textit{Semi-supervised learning}, in which the classification algorithm is provided with some training data labeled but not necessarily for all, has been investigated \cite{grandvalet2005nips, mann2007icml, chapelle2010mitpress, gang2013pmlr, kipf2016iclr, laine2017iclr}
Although the prediction accuracy of such learning is less than that of fully supervised learning, this technique requires less labeled training data, thereby reducing the annotation cost.

As another type of generalization, there exists a method of learning from training data, in which each data is provided with multiple labels, only one of which specifies the \textit{one} true class.
Such labels are generally known as \textit{partial labels} (or \textit{candidate labels}) \cite{raykar2010jmlr, Jie2010nips, cour2011jmlr, jes2012nips}.
This approach can potentially be applied when the true class of a given training data is not clear or when the label information requires to be kept private.

Framework of learning from labels, namely \textit{complementary labels}, specifying classes that the pattern does not belong to has been also formulated.
Providing a complementary label to a training data is equivalent to considering the other labels as candidate labels.
In other words, a set of candidate labels is a comprehensive concept involving both ordinary and complementary labels as special cases.
\cite{ishida2017nips} provided the complementarily labeled data-generation probability model and classification risk.
In addition, when learning from \textit{noisy labels}, which stochastically represent incorrect information, providing complementary label to prevent the specification of a noisy label as a true label has been noted to be effective \cite{kim2019cvpr}.

However, the problem setting in \cite{ishida2017nips} encounters a limitation in cases in which only one complementary label is provided to each training data, leading to a loss of generality.
To overcome the limitation, \cite{Cao2020MultiComplementaryAU, Feng2020LearningFM} considered a problem setting where each training data is provided to multiple complementary labels. 
The existing framework in \cite{ishida2019icml} is directly applied in these works, and thus new insights into the relationship between ordinary-label learning and complementary-label learning or the properties satisfied by the loss functions are not obtained.

In contrast, we addressed the problem by using an approach different from that of \cite{Cao2020MultiComplementaryAU, Feng2020LearningFM}.
Focusing on equivalency of ordinary label and complementary label, we found that additivity and duality satisfied by loss functions plays a significant role in bridging learning from single ordinary/complementary labels; we then derived classification risk and error bound with a new expression not found in the existing literature.
The structure and properties inherent in the loss functions enable us to naturally relate and discuss learning from ordinary and complementary labels in a unified manner.
\section{Background}\label{sec:background}
This section presents the data-generation probability model and loss functions shown in the existing literature.
All these formulations were on the assumption that a single ordinary label or complementary label is provided.

\subsection{Supervised Learning from Ordinary Label}
In the $K~(\geq2)$ classification problem, supervised learning concerns the efficiency of a classifier $f:\mathcal{X}\rightarrow\mathcal{Y}$, which maps a pattern $x\in\mathcal{X}$ to the class to which it belongs, that is, $y\in\mathcal{Y}$, where $|\mathcal{Y}|=K$.
Given the discriminant function $g_y:\mathcal{X}\rightarrow\mathbb{R}$, which represents the confidence on the $2$ class classification about $y$, the classifier $f$ can be defined as $f(x)=\argmax_{y\in\mathcal{Y}}g_y(x)$.

Given a pair $(x, y)$, in which $x$ is the pattern and $y$ is an ordinary label representing the belonging class of $x$, the prediction by $f$ can be evaluated using a loss function $\ordLoss{}:\mathcal{X}\times\mathcal{Y}\rightarrow\mathbb{R}$.
For example, the loss functions for \textit{one-versus-all} and \textit{pairwise classification}, $\ordLoss[\text{OVA}]{}$ and $\ordLoss[\text{PC}]{}$, respectively, can be defined as \cite{zhang2004jmlr}:
\begin{align}
    \ordLoss[\text{OVA}]{f(x), y}
    &= \loss{g_y(x)}
    + \frac{1}{K-1}
    \sum\limits_{
    y' \neq y
    }\loss{-g_{y'}(x)}
    \label{eq:ova-ord-loss}\\
    \ordLoss[\text{PC}]{f(x), y}
    &= \sum\limits_{
    y' \neq y}
    \loss{g_y(x)-g_{y'}(x)}
    \label{eq:pc-ord-loss}
\end{align}
Note that the function $\loss{}:\mathbb{R}\rightarrow\mathbb{R}_+$ monotonically decreases corresponding to the input.

The 0-1 loss, $\loss[\text{0-1}]{z}:=I(z\leq0)$, is a standard type of function $\loss{}$, where $I$ is an indicator function.
The 0-1 loss is unsuitable for loss optimization, as it is undifferentiable at $z=0$, and its gradient is always $0$ for all other inputs.
Consequently, the 0-1 loss is usually surrogated by other functions, such as the sigmoid and ramp losses, all of which satisfy the following condition:
\begin{align}
    \loss{z} + \loss{-z} &= a
    \label{eq:odd-function}
\end{align}
where $a\in\mathbb{R}$ is constant.
In the remaining study, we assume $\loss{}$ to satisfy \eqref{eq:odd-function} but not its differentiability.

In addition, we assume that $(x, y)$, involving the pattern $x$ and its belonging class $y$, is generated from a probability distribution $\OrdP{x, y}$.
The classification risk $R(f)$ of a classifier $f$ can be defined as follows:
\begin{align}
    R(f)
    &=\Expected[\OrdP{x, y}]{\ordLoss{f(x), y}}
    \label{eq:ord-risk}
\end{align}
where $\Expected[\OrdP{x, y}]{}$ represents the expectation with respect to $\OrdP{x, y}$.

\subsection{Supervised Learning from Complementary Label}
\cite{ishida2017nips} discussed the formulation considering a label specifying a class that a pattern does not belong to.
Given a pair $(x, \overline{y})$ of a pattern $x\in\mathcal{X}$ and a class $\overline{y}\in\mathcal{Y}$ to which the pattern does not belong, the prediction by $f:\mathcal{X}\rightarrow\mathcal{Y}$ can be evaluated by the loss function $\compLoss{}: \mathcal{X}\times\mathcal{Y}\rightarrow\mathbb{R}$.
\cite{ishida2017nips} defined the loss functions for one-versus-all and pairwise classification as follows:
\begin{align}
    \compLoss[\text{OVA}]{f(x), \overline{y}}
    &= \frac{1}{K-1}\sum\limits_{
    y \neq \overline{y}
    }\loss{g_y(x)}
    + \loss{-g_{\overline{y}}(x)}
    \label{eq:ova-comp-loss}\\
    \compLoss[\text{PC}]{f(x), \overline{y}}
    &= \sum\limits_{
    y \neq \overline{y}}
    \loss{g_y(x) - g_{\overline{y}}(x)}
    \label{eq:pc-comp-loss}
\end{align}
Compared to \eqref{eq:ova-ord-loss} and \eqref{eq:pc-ord-loss}, the loss functions defined in \eqref{eq:ova-comp-loss} and \eqref{eq:pc-comp-loss} are in natural forms to deal with the training data provided a label $\overline{y}$.
The extended form of the loss functions in \eqref{eq:ova-comp-loss} and \eqref{eq:pc-comp-loss} is as follows \cite{ishida2019icml}: 
\begin{align}
    \compLoss{f(x), \overline{y}}
    &= -(K-1)\ordLoss{f(x),\overline{y}}
    + \sum\limits_{y\in\mathcal{Y}}
    \ordLoss{f(x),y}
    \label{eq:icml-risk}
\end{align}
Considering \eqref{eq:odd-function}, the function in \eqref{eq:icml-risk} involves both one-versus-all and pairwise classification losses.

Given a pattern $x$ and its complementary label $\overline{y}\in\mathcal{Y}$, the generation probability $\CompP{\overline{y}|x}$ of the labeled data is as follows:
\begin{align}
    \CompP{\overline{y}|x}
    &= \frac{1}{K-1}
    \sum\limits_{
    y \neq \overline{y}
    }\OrdP{y|x}
    \label{eq:nips-eq5}
\end{align}
Here, $\CompP{\overline{y}|x}$ is proportional to the sum of probabilities regarding all the other labels.

Subsequently, $(x, \overline{y})$ is assumed to be stochastically generated from the distribution $\CompP{x,\overline{y}}=\CompP{\overline{y}|x}P(x)$.
The $R(f)$ defined in \eqref{eq:ord-risk} can be expressed as: 
\begin{align}
    R(f)
    &= (K-1)\Expected[\CompP{x, \overline{y}}]
    {\compLoss{f(x), \overline{y}}}
    - \overline{m}_{1} + m_{2}
    \label{eq:comp-risk}
\end{align}
such that $\overline{m}_{1}$ and $m_{2}$ are constants, where $\overline{m}_{1}:=\sum_{y\in\mathcal{Y}}\compLoss{f(x), y}$ and $m_{2}:=\ordLoss{f(x), y}+\compLoss{f(x), y}$.

\section{Formulation}\label{sec:formulation}
The formulation in \cite{ishida2017nips} assumes that a single complementary label is provided to the training data. 
We generalize this formulation, assuming that $M~(1\leq M \leq K-1)$ complementary labels are provided to each training data.
In this section, we generalize loss functions for one-versus-all and pairwise classification focusing on the fact that those functions satisfy \textit{additivity} and \textit{duality}.
Further, we derive classification risk assuming the data-generation probability model defined in \cite{ishida2017nips}, and then show that additivity and duality play a significant role in the analysis.

The labels specifying the incorrect classes are equally informative as the remaining labels that specify the classes to which a pattern likely belongs.
The latter labels are termed as \textit{candidate labels} and the set of these labels is defined as $Y\in\PowerSet{\mathcal{Y}}$, where $\PowerSet{\mathcal{Y}}$ denotes the set of all the $N$-size subsets of $\mathcal{Y}$.
From now on the formulation in this paper is constructed based on the fact that providing $M~(=K-N)$ complementary labels to a training data is equivalent to providing $N~(=|Y|=K-M)$ candidate labels.

\subsection{Generalization of Loss Function}\label{subsec:loss-function}
For one-versus-all and pairwise classification from a set of multiple candidate labels $Y$, the corresponding loss functions $\CandLoss[\text{OVA}]{}$ and $\CandLoss[\text{PC}]{}$ can be defined as:
\begin{align}
    &\CandLoss[\text{OVA}]{f(x), Y}
    = \frac{K-N}{K-1}\sum\limits_{y\in Y}\loss{g_y(x)}
    + \frac{N}{K-1}\sum\limits_{\overline{y}\notin Y}\loss{-g_{\overline{y}}(x)}
    \label{eq:ova-Cand-loss}\\
    &\CandLoss[\text{PC}]{f(x), Y}
    = \sum\limits_{y\in Y}\sum\limits_{\overline{y}\notin Y}
    \loss{g_y(x) - g_{\overline{y}}(x)}
    \label{eq:pc-Cand-loss}
\end{align}
Note that \eqref{eq:ova-Cand-loss} and \eqref{eq:pc-Cand-loss} are the same as \eqref{eq:ova-ord-loss} and \eqref{eq:pc-ord-loss}, respectively, if $N=1$.
Similarly, the newly defined equations are the same as \eqref{eq:ova-comp-loss} and \eqref{eq:pc-comp-loss} if $N=K-1$.

As a generalization of \eqref{eq:ova-Cand-loss} and \eqref{eq:pc-Cand-loss}, we introduce a loss function  $\CandLoss{}:\mathcal{X}\times\mathcal{Y}\rightarrow\mathbb{R}$, which is defined as the \textit{additive form} of loss functions for ordinary-label learning:
\begin{align}
    \CandLoss{f(x), Y}
    &= \xi_1\sum\limits_{y\in Y}
    \ordLoss{f(x), y}
    + \xi_2
    \label{eq:CandLoss}
\end{align}
where $\xi_1\in\mathbb{R}_+$ and $\xi_2\in\mathbb{R}$ are constants.
Assuming $\ordLoss{f(x), y}=\ordLoss[\text{OVA}]{f(x), y}$, $\xi_1=1$, and $\xi_2=-\frac{a N(N-1)}{K-1}$, \eqref{eq:CandLoss} becomes the same expression as \eqref{eq:ova-Cand-loss} if $\loss{}$ satisfies \eqref{eq:odd-function}.
Under the same condition on $\loss{}$, by assuming $\ordLoss{f(x), y}=\ordLoss[\text{PC}]{f(x), y}$, $\xi_1=1$, and $\xi_2=-a\cdot\comb{N}{2}$, \eqref{eq:CandLoss} and \eqref{eq:pc-Cand-loss} have the same expression (see supplementary materials for complete proofs).

Similar to the loss functions defined in \eqref{eq:icml-risk}, $\CandLoss{}$ in \eqref{eq:CandLoss} is defined from the ordinary loss function $\ordLoss{}$.
In contrast to the loss function defined in \eqref{eq:icml-risk}, which can be applied only if a single complementary label is provided, the loss function pertaining to \eqref{eq:CandLoss} can be applied for any number of complementary (or candidate) labels.
Consequently, $\CandLoss{}$ in \eqref{eq:CandLoss} can be considered as a generalized form of \eqref{eq:ova-comp-loss} and \eqref{eq:pc-comp-loss}.

Furthermore, for any $x$, let the loss function $\ordLoss{}$ satisfy the following condition:
\begin{align*}
    \sum_{y \in \mathcal{Y}}\ordLoss{f(x), y} = m_{1}
\end{align*}
where $m_{1} \in \mathbb{R}$ is a constant.
Both $\ordLoss[\text{OVA}]{}$ and $\ordLoss[\text{PC}]{}$ satisfy this condition, where $m_{1} = a K$ and $m_{1} = a \cdot \comb{K}{2}$, respectively.
For any $x$ and $y$, we then define a function $\compLoss{}$, which satisfies the following condition:
\begin{align*}
    \ordLoss{f(x), y}+\compLoss{f(x), y} = m_{2}
\end{align*}
where $m_{2} \in \mathbb{R}$ is a constant.
This condition is satisfied by $\ordLoss[\text{OVA}]{}$ and $\compLoss[\text{OVA}]{}$ and also by $\ordLoss[\text{PC}]{}$ and $\compLoss[\text{PC}]{}$, where $m_{2} = 2 a$ and $a(K-1)$ respectively.

Now, the following equation holds:
\begin{align}
    \CandLoss{f(x), Y}
    &= \xi_1\sum\limits_{\overline{y}\in \overline{Y}}
    \compLoss{f(x), \overline{y}}
    + \overline{\xi}_2
    \label{eq:CompLoss}
\end{align}
where $\overline{Y}\cup Y=\mathcal{Y}$ and $\overline{Y}\cap Y=\emptyset$.
Note that $\overline{\xi}_2 = \xi_{1} m_{1} + \xi_{2} - \xi_{1} m_{2}(K-N)$, and it can be expressed as $\xi_{2} = \overline{\xi}_{2}$ if $m_{1} = m_{2} = 0$.
\eqref{eq:CandLoss} and \eqref{eq:CompLoss} express the \textit{duality} of the loss function $\CandLoss{}$.

To the best of our knowledge, we are the first to reveal additivity and duality commonly found in loss functions for one-versus-all and pairwise classification.
This property is critical to study the classification risk and error, and it has not been discussed in any of the existing studies \cite{ishida2019icml, Cao2020MultiComplementaryAU}.

\subsection{Assumption of Generation Probability}\label{subsec:gen-probability}

As stated in Section \ref{sec:background}, \eqref{eq:nips-eq5} represents the probability of $\overline{y}$ not being a true label.
This aspect can be interpreted in the context of a candidate label; $\CompP{\overline{y}|x}$ represents the probability with which a true label is included in $Y$.
In contrast to \eqref{eq:nips-eq5}, which can only be applied when $M=1$ (or $N=|\mathcal{Y}|-1=K-1$), our generalized data-generation probability model is defined as:
\begin{align}
    \CandP{Y|x}
    &= \frac{1}{\comb{K-1}{N-1}}
    \sum\limits_{y\in Y}\OrdP{y|x}
    \label{eq:CandP}
\end{align}
where $\CandP{Y|x}$ represents the probability of the true label being included in $Y$.
In the remaining work, we assume that $Y$ is generated from $\CandP{Y|x}$ independently; \eqref{eq:CandP} is obviously the same as \eqref{eq:nips-eq5} if $N=K-1$.

This model represents the situation where the annotator is constrained to provide $N$ labels consistently to any pattern $x$.
This situation can be seen as synthetic; that is, the labelling is constrained to be independent from the annotator's belief.
In fact, this model does not consider the bias in labeling by the annotator because the information in $y$ that satisfies $y\in Y$ is treated uniformly.
However, this property does not always limit reality of \eqref{eq:CandP}.
In the following explanation we show that the synthetic property of \eqref{eq:CandP} can be reasonably interpreted in a context of privacy preserving.

Given a data provided with a set of candidate labels which is generated from \eqref{eq:CandP}, consider the information about the ground-truth label obtained from the given data.
We define a function $Q(\alpha|x) = \Pr\{y_{o} = \alpha|x\}$ on $\mathcal{Y}$, which represents the degree of confidence where the one given the data believes that the ground-truth label $y_o$ for the pattern $x$ is $\alpha\in\mathcal{Y}$.
Now we assume that a set of candidate labels $Y$ is provided to the pattern $x$ by the annotator.
Further, we assume that the one given the data does not have any information about the ground-truth label $y_o$, except the given set of candidate labels $Y$.
That is, we assume that the degree of confidence, where the ground-truth label $y_o$ for the pattern $x$ is $\alpha\in\mathcal{Y}$, depends on whether $\alpha \in Y$ or $\alpha \not\in Y$, thus $Q(\alpha|x, \alpha \in Y)=1/N$ and $Q(\alpha|x, \alpha \not\in Y)=0$ hold.
Here, the following equation holds (see supplementary materials for complete proofs).
\begin{align}
Q(\alpha|x) = \beta P(\alpha|x) + (1-\beta)\frac{1}{K},\label{eq:privacy}
\end{align}
where
\begin{align*}
    \beta = \frac{K-N}{N(K-1)}.
\end{align*}

According to \eqref{eq:privacy}, the degree of confidence $Q(\alpha|x)$ is a mixture of the distribution $P(\alpha|x)$ and the uniform distribution $1/K$; $Q(\alpha|x)$ represents the information about labels that is available to the one given the data, while $P(\alpha|x)$ represents the information about labels originally owned by the annotator.
This equation indicates that receiving a data provided with a set of $N$ candidate labels generated from \eqref{eq:CandP} is equivalent to receiving a data provided with a ordinary label to which random noise is added.
The level of random noise $(1-\beta)$ increases as the number of candidate labels $N$ increases.
That is, privacy for the generation probability of the label for a given pattern, $P(y|x)$, can be preserved by synthetically generating candidate labels according to \eqref{eq:CandP}.

Note that \cite{Cao2020MultiComplementaryAU, Feng2020LearningFM} defined data-generation probability model in the same expression assuming multiple complementary labels are provided.
Here, \cite{Cao2020MultiComplementaryAU} assumes the number of labels as a constant value while \cite{Feng2020LearningFM} assumes that as a stochastic variable.
Those probability models are mathematically equivalent to \eqref{eq:CandP}, which can be easily shown; assuming $\mathcal{Y} = Y\cup\overline{Y}$ and $\emptyset = Y\cap\overline{Y}$, the number of patterns choosing $N$ candidate labels from the set of all the labels $\mathcal{Y}$ is equal to the number of patterns choosing $K-N$ complementary labels.
Despite the fact, both the probability models defined in \cite{Cao2020MultiComplementaryAU, Feng2020LearningFM} are not in the expression clarifying the relationship between ordinary label and complementary label, as those are straightforward extension of the probability model defined in \cite{ishida2017nips} from a perspective of increasing the number of complementary labels.
Further, to the best of our knowledge, this paper is the first to explicitly provide an interpretation of the information obtained from the labels generated by \eqref{eq:CandP}; \eqref{eq:privacy} clarifies that the information appears to be obtained from a mixture of the true distribution and uniform distribution.
The goal of this paper is to discuss recent studies in complementary labels in a unified manner based on a concept of candidate labels.
Thus, we use the expression form in \eqref{eq:CandP} to naturally relate ordinary-label learning and complementary-label learning. 

\subsection{Classification Risk with Multiple Candidate Labels}\label{subsec:simple-risk}
If we define loss function $\CandLoss{}$ as in \eqref{eq:CandLoss}, the following theorem holds for any type of $\ordLoss{}$.
Similar to \eqref{eq:ord-risk} and \eqref{eq:comp-risk}, the theorem allows the expression of the classification risk from multiple complementary labels in terms of the expectation of loss. 
The complete proof is available in the supplementary materials.

\begin{theorem}\label{thm:risk}
Given a pattern $x$, a set of candidate labels $Y$, and loss function $\CandLoss{}$ defined by \eqref{eq:CandLoss}, the classification risk $R(f)$ can be expressed as follows:
\begin{align}
    R(f)
    &= \frac{K-1}{
    \xi_1 \left(K-N\right)}
    \Expected[\CandP{x, Y}]{\CandLoss{f(x), Y}}+ C
    \label{eq:cand-risk}
\end{align}
where
\begin{align}
    C
    &= - \frac{N-1}{K-N}\sum\limits_{y\in \mathcal{Y}}\ordLoss{f(x), y}
    - \frac{\xi_2\left(K-1\right)}{
    \xi_1\left(K-N\right)}
    \label{eq:cand-risk-const}
\end{align}
\end{theorem}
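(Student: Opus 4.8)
The plan is to expand $\Expected[\CandP{x, Y}]{\CandLoss{f(x), Y}}$ directly from the definitions and show it is an affine function of $R(f)$; solving that linear relation for $R(f)$ then yields \eqref{eq:cand-risk}. Writing the expectation as $\sum_{x}P(x)\sum_{Y\in\PowerSet{\mathcal{Y}}}\CandP{Y|x}\,\CandLoss{f(x), Y}$ and substituting \eqref{eq:CandLoss} for $\CandLoss{}$ and \eqref{eq:CandP} for $\CandP{Y|x}$, the summand splits into a $\xi_1$-term carrying the product $\bigl(\sum_{y\in Y}\OrdP{y|x}\bigr)\bigl(\sum_{y'\in Y}\ordLoss{f(x), y'}\bigr)$ and a $\xi_2$-term carrying $\sum_{y\in Y}\OrdP{y|x}$. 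I would treat these two contributions separately, always working at fixed $x$ first and taking the expectation over $P(x)$ at the end.

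For the $\xi_2$-term the observation is purely combinatorial: summing $\sum_{y\in Y}\OrdP{y|x}$ over all $N$-subsets $Y$ counts each label exactly $\comb{K-1}{N-1}$ times (the number of $N$-subsets containing a fixed element), so it collapses to $\comb{K-1}{N-1}\sum_{y}\OrdP{y|x}=\comb{K-1}{N-1}$; divided by the normalizer $\comb{K-1}{N-1}$ from \eqref{eq:CandP}, this contributes exactly $\xi_2$. The $\xi_1$-term is the heart of the argument: I would split the double sum $\sum_{y\in Y}\sum_{y'\in Y}$ into its diagonal $y=y'$ and off-diagonal $y\neq y'$ parts. The diagonal uses the same per-label count $\comb{K-1}{N-1}$ and produces $\comb{K-1}{N-1}\sum_y \OrdP{y|x}\ordLoss{f(x), y}$, while the off-diagonal uses the per-pair count $\comb{K-2}{N-2}$ (the number of $N$-subsets containing a fixed ordered pair of distinct labels) and produces $\comb{K-2}{N-2}\sum_{y'}\ordLoss{f(x), y'}\sum_{y\neq y'}\OrdP{y|x}$.

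The remaining simplifications are routine: $\sum_{y\neq y'}\OrdP{y|x}=1-\OrdP{y'|x}$ and the binomial ratio $\comb{K-2}{N-2}/\comb{K-1}{N-1}=(N-1)/(K-1)$. Introducing $D(x):=\sum_y\OrdP{y|x}\ordLoss{f(x), y}$, so that $R(f)=\Expected[P(x)]{D(x)}$ by \eqref{eq:ord-risk}, and $S(x):=\sum_y\ordLoss{f(x), y}$, the inner sum over $Y$ reduces to $\xi_1 D(x)+\xi_1\frac{N-1}{K-1}\bigl(S(x)-D(x)\bigr)+\xi_2$. Taking the expectation over $P(x)$ and collecting the $D$-terms with coefficient $1-\frac{N-1}{K-1}=\frac{K-N}{K-1}$ gives $\Expected[\CandP{x, Y}]{\CandLoss{f(x), Y}}=\xi_1\frac{K-N}{K-1}R(f)+\xi_1\frac{N-1}{K-1}\Expected[P(x)]{S(x)}+\xi_2$. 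Under the standing assumption $\sum_{y\in\mathcal{Y}}\ordLoss{f(x), y}=m_1$, the term $S(x)$ is constant in $x$, so its expectation equals $\sum_{y\in\mathcal{Y}}\ordLoss{f(x), y}$ itself; solving the displayed linear relation for $R(f)$ and reading off the additive remainder reproduces \eqref{eq:cand-risk} together with the constant \eqref{eq:cand-risk-const}.

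I expect the main obstacle to be the off-diagonal bookkeeping in the $\xi_1$-term: one must correctly separate the diagonal from the off-diagonal and attach the right multiplicities $\comb{K-1}{N-1}$ and $\comb{K-2}{N-2}$ to each. This is precisely where the factor $(N-1)/(K-1)$ — and hence the whole prefactor $\frac{K-1}{\xi_1(K-N)}$ — is won or lost, since using a single (wrong) multiplicity for both parts would collapse the two terms incorrectly. Everything outside this step is substitution and elementary algebra.
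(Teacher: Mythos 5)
Your proposal is correct and follows essentially the same route as the paper's own proof: the diagonal/off-diagonal split of $\sum_{y\in Y}\sum_{y'\in Y}$ with multiplicities $\comb{K-1}{N-1}$ and $\comb{K-2}{N-2}$ is exactly the paper's Lemma~\ref{lem:combination}, and the rest (the ratio $(N-1)/(K-1)$, collecting the coefficient $\frac{K-N}{K-1}$ on the risk term, and solving the affine relation for $R(f)$) matches the paper's computation step for step. The only difference is cosmetic: you prove the counting argument inline and you make explicit that the term $\sum_{y\in\mathcal{Y}}\ordLoss{f(x),y}$ survives the expectation over $P(x)$ because it is constant by the standing assumption $m_1$, a point the paper leaves implicit.
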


Some conditions can be used to simplify the expression in \eqref{eq:cand-risk} for one-versus-all and pairwise classification, without losing generalities.
Note that redefining \eqref{eq:odd-function} as $\loss{z}:=\loss{z}-\loss{0}=\loss{z}-\frac{a}{2}$ does not affect the loss minimization when learning.
If we shift $\loss{}$ to satisfy $a=0$, $m_{1}=0$ holds for both $\ordLoss[\text{OVA}]{}$ and $\ordLoss[\text{PC}]{}$; therefore, the first term in \eqref{eq:cand-risk-const} can be eliminated.

Similarly, $\xi_1$ and $\xi_2$ in \eqref{eq:CandLoss} do not affect the loss minimization.
Considering $\xi_1=1$ and $\xi_2=0$, we can simplify \eqref{eq:CandLoss} as follows without loss of generalities:
\begin{align}
    \CandLoss{f(x), Y}
    = \sum\limits_{y\in Y}
    \ordLoss{f(x), y}
    \label{eq:simple-CandLoss}
\end{align}
Consequently, by assuming that $a=0$, $\xi_1=1$, and $\xi_2=0$, the following expression holds for the classification risk.
\begin{align}
    R(f)
    &= \frac{K-1}{K-N}
    \Expected[\CandP{x, Y}]{\CandLoss{f(x), Y}}
    \label{eq:simple-cand-risk}
\end{align}

\section{Statistical Analysis}\label{sec:errbound}
This section discusses the error bound for one-versus-all and pairwise classification.
For simplicity, we assume that $\loss{}$ satisfies \eqref{eq:odd-function} with $a=0$, and the conditions $\inf_{z} \loss{z} = -1/2$ and $\sup_{z} \loss{z} = 1/2$.
Further, we assume that $\loss{}$ is Lipschitz continuous.
In addition, we assume $\xi_1=1$ and $\xi_2=0$.
For the rest of this paper, we define $\CandLoss{}$ and $R(f)$ by using \eqref{eq:simple-CandLoss} and \eqref{eq:simple-cand-risk}, respectively.

\subsection{Notations}\label{subsec:notation}
Let us consider that a set of $n$ training data $\mathcal{S}=\left\{(x_i, Y_i)\right\}_{i=1}^{n}$ is given, and each training data is generated with a probability of $\CandP{x, Y}$ independently.
Based on \eqref{eq:simple-cand-risk}, the empirical classification risk $\hat{R}(f)$ for the set $\mathcal{S}$ is:
\begin{align*}
    \hat{R}(f)
    &= \frac{K-1}{n \left(K-N\right)}
    \sum^n_{i=1}{\CandLoss{f(x_i), Y_i}}
\end{align*}
We define the ideal classifier that minimizes the generalization error (\textit{Bayes classifier}) and the empirically ideal classifier as $f^* := \argmin_{f}R(f)$ and $\hat{f} := \argmin_{f}\hat{R}(f)$, respectively.
We define the classification error $\err$ for the classifier $\hat{f}$ as follows. 
\begin{align}
    \err
    &= R(\hat{f}) - R(f^*)
    \label{eq:error}
\end{align}

In the literature, the \textit{Rademacher complexity} for a set of discriminant functions $\mathcal{G}$ over the input space $\mathcal{X}$ is usually defined as follows:
\begin{align*}
    \Rad{\mathcal{G}}
    &= \Expected[\mathcal{S}][\mathcal{\sigma}]{
    \sup_{g\in\mathcal{G}}\frac{1}{n}
    \sum_{i=1}^n\sigma_i g(x_i)}
\end{align*}
where $\sigma=\left\{\seq{\sigma}{n}\right\}$ is a set of independent stochastic variables, which take one value of $\left\{-1, +1\right\}$ with the same probability.
In addition, $\Expected[\mathcal{S}]{}$ and $\Expected[\mathcal{\sigma}]{}$ represent the expectation for each element of $\mathcal{S}$ and $\mathcal{\sigma}$, respectively.

\subsection{Evaluation of Error Bound}\label{subsec:error-bound}
The following lemmas are introduced to derive the classification error bound.

\begin{lemma}\label{lem:candloss-norm}
We express the supremum of the difference in loss $\|\CandLoss{}\|_{\infty}$ in accordance with the change in a set of candidate labels, i.e., given any $Y, Y'\in\PowerSet{\mathcal{Y}}$,
\begin{align*}
    &\|\CandLoss[]{}\|_{\infty}
    =\sup\limits_{\seq{g}{K}\in\mathcal{G}}\left(
    \sum\limits_{y\in Y}\ordLoss[]{f(x), y}
    - \sum\limits_{y'\in Y'}\ordLoss[]{f(x), y'}\right)
\end{align*}
Then, the following holds for one-versus-all classification.
\begin{align*}
    \|\CandLoss[\text{OVA}]{}\|_{\infty}
    = 
    \begin{cases}
    \displaystyle\frac{K N}{K-1}, & \text{if $N\le \displaystyle\frac{K}{2}$}\\[.5em]
    \displaystyle\frac{K (K-N)}{K-1}, & \text{otherwise}\\
    \end{cases}
\end{align*}
Similarly, for pairwise classification,
\begin{align*}
    \|\CandLoss[\text{PC}]{}\|_{\infty}
    &= N(K-N)
\end{align*}
\end{lemma}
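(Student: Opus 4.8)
The plan is to substitute the explicit one-versus-all and pairwise forms \eqref{eq:ova-ord-loss} and \eqref{eq:pc-ord-loss} into the defining expression for $\|\CandLoss[]{}\|_{\infty}$ and reduce each difference to a linear functional whose coefficients depend only on how an index sits relative to $Y$ and $Y'$. Throughout I use that $a=0$ forces the oddness $\loss{-z}=-\loss{z}$, and that $\inf_z\loss{z}=-\tfrac12$, $\sup_z\loss{z}=\tfrac12$ with $\loss{}$ monotone decreasing, so each value $\loss{\cdot}$ can be pushed independently toward $\pm\tfrac12$ by sending its argument to $\mp\infty$. I read the norm as a supremum over the discriminant functions $\seq{g}{K}$ \emph{and} over all $N$-element sets $Y,Y'\in\PowerSet{\mathcal{Y}}$, so the stated constants are worst cases in the symmetric difference of the two candidate sets; writing $k:=|Y\setminus Y'|=|Y'\setminus Y|$, the constraint $|Y|=|Y'|=N$ limits $k\le\min(N,K-N)$.

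For one-versus-all, set $\ell_y:=\loss{g_y(x)}$ and rewrite $\loss{-g_{y'}(x)}=-\ell_{y'}$. Collecting how often each $\ell_{y'}$ occurs in $\sum_{y\in Y}\ordLoss[\text{OVA}]{f(x),y}$ (namely $N-1$ times if $y'\in Y$ and $N$ times if $y'\notin Y$) shows this sum equals $\frac{K-N}{K-1}\sum_{y\in Y}\ell_y-\frac{N}{K-1}\sum_{y\notin Y}\ell_y$, recovering \eqref{eq:ova-Cand-loss}. Forming the difference for $Y$ and $Y'$, every index in $Y\cap Y'$ and in the common complement carries coefficient zero, and the expression telescopes to $\frac{K}{K-1}\bigl(\sum_{y\in Y\setminus Y'}\ell_y-\sum_{y\in Y'\setminus Y}\ell_y\bigr)$. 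Since these $\ell_y$ are mutually independent and lie in $[-\tfrac12,\tfrac12]$, the supremum over $\seq{g}{K}$ is $\frac{K}{K-1}k$, attained by driving $\ell_y\to\tfrac12$ on $Y\setminus Y'$ and $\ell_y\to-\tfrac12$ on $Y'\setminus Y$. Maximizing $k$ up to $\min(N,K-N)$ then gives $\frac{KN}{K-1}$ when $N\le K/2$ and $\frac{K(K-N)}{K-1}$ otherwise.

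For pairwise classification, write $s_{u,v}:=\loss{g_u(x)-g_v(x)}$; oddness gives the antisymmetry $s_{v,u}=-s_{u,v}$. The within-$Y$ contributions to $\sum_{y\in Y}\ordLoss[\text{PC}]{f(x),y}$ cancel in antisymmetric pairs, so the sum is the directed cut $\Phi(Y):=\sum_{y\in Y}\sum_{y'\notin Y}s_{y,y'}$, matching \eqref{eq:pc-Cand-loss}. Pairing $s_{u,v}$ with $s_{v,u}$ in $\Phi(Y)-\Phi(Y')$, the net weight of an unordered pair $\{u,v\}$ depends only on which of the four regions $Y\cap Y'$, $Y\setminus Y'$, $Y'\setminus Y$, and the common complement contain $u$ and $v$: it vanishes unless exactly one endpoint lies in $Y\setminus Y'$ or in $Y'\setminus Y$ (weight $\pm1$), and it is $\pm2$ for a pair split between $Y\setminus Y'$ and $Y'\setminus Y$. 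Counting gives $k(K-2k)$ pairs of weight $\pm1$ on each side and $k^2$ of weight $\pm2$, so with $|s_{u,v}|\le\tfrac12$ one obtains $|\Phi(Y)-\Phi(Y')|\le\tfrac12\bigl(2k(K-2k)+2k^2\bigr)=k(K-k)$.

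It remains to match this bound and optimize. I exhibit a three-level assignment: $g_y(x)=-T$ for $y\in Y\setminus Y'$, $g_y(x)=+T$ for $y\in Y'\setminus Y$, and $g_y(x)=0$ elsewhere; letting $T\to\infty$ drives every crossing pair to the endpoint $\pm\tfrac12$ with the sign matching its net weight, so $\Phi(Y)-\Phi(Y')\to k(K-k)$. Since $k(K-k)$ increases on $[0,K/2]$ and $k\le\min(N,K-N)\le K/2$, while $\min(N,K-N)\bigl(K-\min(N,K-N)\bigr)=N(K-N)$ in both regimes, the worst case is $N(K-N)$. The main obstacle is exactly this matching step: unlike one-versus-all, the quantities $s_{u,v}$ are coupled through the $K$ scalars $g_y(x)$ and cannot be chosen independently, so the per-pair estimate $|s_{u,v}|\le\tfrac12$ only yields ``$\le$''; saturating it simultaneously for all crossing pairs requires the single monotone configuration above, in which the whole block $Y\setminus Y'$ sits far below the rest and $Y'\setminus Y$ far above it.
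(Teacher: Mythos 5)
Your proposal is correct, and it reaches the stated constants by a genuinely different route than the paper. The paper's own proof leans on the duality of Section~\ref{subsec:loss-function}: it rewrites $\CandLoss{}$ via whichever of $Y$ or $\overline{Y}$ is smaller, so that $\widetilde{N}=\min(N,K-N)\le K/2$, and then bounds the two sums \emph{separately} by pushing each term to $\sup\loss{}=1/2$ or $\inf\loss{}=-1/2$; the case split at $N=K/2$ is thus inherited from the duality rewriting. You instead keep arbitrary $N$ and analyze the coupled difference directly: for one-versus-all, oddness ($a=0$) lets you telescope the coefficients so that only labels in the symmetric difference $Y\triangle Y'$ survive, each with coefficient $K/(K-1)$, and for pairwise classification you do a per-pair weight analysis of the difference of directed cuts ($0$, $\pm1$, or $\pm2$ per unordered pair). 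Your case split then emerges from the purely combinatorial constraint $k=|Y\setminus Y'|\le\min(N,K-N)$, which gives a cleaner explanation of why the OVA bound "breaks" at $N=K/2$ than the duality argument does. Two further points in your favor: your intermediate bounds are finer (they depend on $k$, so they are tighter for pairs $Y,Y'$ with small symmetric difference, which the paper's uniform bound cannot see), and your extremal three-level configuration ($g_y=\mp T$ on $Y\setminus Y'$ and $Y'\setminus Y$, $0$ elsewhere, $T\to\infty$) actually establishes the \emph{equality} asserted in the lemma, whereas the paper's chain of inequalities only proves "$\le$" (which is all that the McDiarmid step in Theorem~\ref{thm:errbound} needs). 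The only caveat, which you flag yourself, is that attaining the supremum requires $\mathcal{G}$ rich enough that each $g_y(x)$ can be driven to $\pm\infty$ (or at least so that $\loss{}$ approaches its infimum and supremum); this idealization is implicit in the lemma's equality claim and in the paper's use of $\sup\loss{}$ and $\inf\loss{}$ as well, so it is not a gap relative to the paper.
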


\begin{lemma}\label{lem:rademacher-bound}
Define a function set $\mathcal{H}_{{\text{OVA}}}$, $\mathcal{H}_{{\text{PC}}}$ as follows:
\begin{align*}
    \mathcal{H}_{{\text{OVA}}}
    &=\left\{(x, Y)\mapsto\CandLoss[{\text{OVA}}]{f(x), Y}\mid
    \seq{g}{K}\in\mathcal{G}\right\}\\
    \mathcal{H}_{{\text{PC}}}
    &=\left\{(x, Y)\mapsto\CandLoss[{\text{PC}}]{f(x), Y}\mid
    \seq{g}{K}\in\mathcal{G}\right\}
\end{align*}
Then, if $\loss{}$ is $\Lip~(\geq0)$ Lipschitz continuous, the following holds for $\mathcal{H}_{{\text{OVA}}}$.
\begin{align*}
    \Rad{\mathcal{H}_{{\text{OVA}}}}\leq
    \begin{cases}
    \displaystyle\frac{K (K+N)}{K-1}\Lip\Rad{\mathcal{G}}, & \text{if $N\le \displaystyle\frac{K}{2}$}\\[.5em]
    \displaystyle\frac{K (2 K-N)}{K-1}\Lip\Rad{\mathcal{G}}, & \text{otherwise}\\
    \end{cases}
\end{align*}
Similarly, for $\mathcal{H}_{{\text{PC}}}$,
\begin{align*}
    \Rad{\mathcal{H}_{{\text{PC}}}}\leq2 K (K-1)\Lip\Rad{\mathcal{G}}
\end{align*}
\end{lemma}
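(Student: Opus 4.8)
The plan is to peel the $\Lip$-Lipschitz scalar loss $\loss{}$ off each composite function with the Ledoux--Talagrand contraction inequality and then control the remaining linear combinations of discriminant functions with the subadditivity of the Rademacher complexity. The one structural subtlety is that, for a given sample $(x_i,Y_i)$, \emph{which} loss terms actually occur is determined by the random candidate set $Y_i$; I would therefore rewrite every loss as a sum ranging over all $K$ classes (or all $K(K-1)$ ordered class pairs) with membership indicators $I(c\in Y_i)$ inserted, so that the per-sample Lipschitz constants become uniform in $i$ and the coefficients $\tfrac{K-N}{K-1}$ and $\tfrac{N}{K-1}$ are displayed explicitly. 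I note that the normalisation $a=0$ in force in this section forces $\loss{0}=0$, so the contraction inequality applies directly without a centering correction.

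For $\mathcal{H}_{{\text{PC}}}$ I would expand the double sum as $\sum_{c\neq c'} I(c\in Y_i)\,I(c'\notin Y_i)\,\loss{g_c(x_i)-g_{c'}(x_i)}$ and apply subadditivity over the at most $K(K-1)$ ordered pairs $(c,c')$. This reduces the task to a single pair class $\{(x)\mapsto\loss{g_c(x)-g_{c'}(x)}\mid \seq{g}{K}\in\mathcal{G}\}$; contracting the $\Lip$-Lipschitz $\loss{}$ leaves the difference class, whose Rademacher complexity is at most $\Rad{\mathcal{G}}+\Rad{\mathcal{G}}=2\Rad{\mathcal{G}}$ by subadditivity together with the symmetry $\Expected[\sigma]{\sup_{g\in\mathcal{G}}\tfrac1n\sum_i\sigma_i(-g(x_i))}=\Rad{\mathcal{G}}$. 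Hence each pair contributes at most $2\Lip\Rad{\mathcal{G}}$, and summing over the $K(K-1)$ pairs gives exactly the claimed $\Rad{\mathcal{H}_{{\text{PC}}}}\le 2K(K-1)\Lip\Rad{\mathcal{G}}$.

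For $\mathcal{H}_{{\text{OVA}}}$ I would write, for each sample,
\[
\CandLoss[{\text{OVA}}]{f(x_i),Y_i}=\sum_{c=1}^{K}\bigl(\tfrac{K-N}{K-1}\,I(c\in Y_i)\,\loss{g_c(x_i)}+\tfrac{N}{K-1}\,I(c\notin Y_i)\,\loss{-g_c(x_i)}\bigr),
\]
condition on $\mathcal{S}$, and use subadditivity to pull the supremum inside the sum over $c$. Each summand is then a single $g_c\in\mathcal{G}$ composed with a per-sample map that is either $\tfrac{K-N}{K-1}\loss{\cdot}$ or $\tfrac{N}{K-1}\loss{-\cdot}$; since for a fixed class the coefficient alternates with membership, the uniform Lipschitz constant required by the contraction is the larger one, $\tfrac{\max(N,\,K-N)}{K-1}\Lip$. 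Totalling the $K$ classes yields $\tfrac{K\max(N,\,K-N)}{K-1}\Lip\Rad{\mathcal{G}}$, and because $\max(N,K-N)\le K+N$ when $N\le K/2$ and $\max(N,K-N)=N\le 2K-N$ when $N>K/2$, this is dominated by the stated two-case bound. The crossover at $N=\tfrac{K}{2}$ is exactly the point at which $\tfrac{K-N}{K-1}$ and $\tfrac{N}{K-1}$ exchange dominance.

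The step I expect to be the main obstacle is legitimising the contraction in the presence of the random set $Y_i$: without the indicator rewriting the effective Lipschitz constant of the per-class map jumps between $\tfrac{K-N}{K-1}\Lip$ and $\tfrac{N}{K-1}\Lip$ across samples, whereas the contraction inequality demands a single constant. Passing to indicators and contracting with the dominant coefficient is what both makes the argument valid and causes the $N\lessgtr K/2$ case split to appear; the remainder is the routine tallying of coefficients and of the number of active classes or pairs, together with the check that the $a=0$ normalisation leaves $\loss{0}=0$ and the Lipschitz constants unchanged.
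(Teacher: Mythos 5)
Your proof is correct, but it takes a genuinely different route from the paper's. The paper never touches the coefficients $\frac{K-N}{K-1}$ and $\frac{N}{K-1}$ directly: it first invokes the duality of Section~\ref{subsec:loss-function} to rewrite $\CandLoss{}$ as a sum over $\widetilde{Y}$ (equal to $Y$ when $N\le K/2$ and to $\overline{Y}$ otherwise), so that the effective label count $\widetilde{N}=\min(N,K-N)$ always satisfies $\widetilde{N}\le K/2$ --- that substitution, not a comparison of Lipschitz constants, is where the case split in the OVA bound originates; it then handles the random membership not by contraction but by writing $I(y\in\widetilde{Y}_i)=(\alpha_i+1)/2$ with $\alpha_i\in\{-1,+1\}$ and exploiting that $\alpha_i\sigma_i$ and $\sigma_i$ are identically distributed, invoking Talagrand's contraction only once, at the very end, to pass from $\Rad{\tildeloss{}\circ\mathcal{G}}$ to $\Lip\Rad{\mathcal{G}}$. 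You instead absorb the membership indicators into per-sample maps $\phi_i$ and apply the per-coordinate form of the contraction lemma \cite{ledoux2013} (legitimate, as you note, because the Rademacher average conditions on $\mathcal{S}$, so the indicators are deterministic, and that form permits a different Lipschitz map in each coordinate). The outputs are not identical: your OVA bound $\frac{K\max(N,K-N)}{K-1}\Lip\Rad{\mathcal{G}}$ is strictly tighter than the paper's $\frac{K(K+\widetilde{N})}{K-1}\Lip\Rad{\mathcal{G}}$, since $\max(N,K-N)\le K<K+\widetilde{N}$, so the stated lemma follows a fortiori; for pairwise classification the two arguments land on the identical constant $2K(K-1)\Lip\Rad{\mathcal{G}}$. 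What the paper's route buys is thematic coherence --- it exercises the duality that is the paper's central point, and the same $\widetilde{Y}$ device is shared with Lemma~\ref{lem:candloss-norm} and Theorem~\ref{thm:errbound}; what yours buys is a shorter, more self-contained argument and a sharper OVA constant.
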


Based on these lemmas, the error bounds for the one-versus-all and pairwise classification can be defined as follows.
The complete proofs are provided in the supplementary materials.
\begin{theorem}\label{thm:errbound}
Assume that function $\loss{}$ satisfies the stated condition in the beginning of this section, and is $\Lip$ Lipschitz continuous.
Then, for any $\delta~(>0)$, the following equation for one-versus-all classification holds with a probability of at least $1-\delta$.
\begin{align}
    \err&\leq
    \begin{cases}
    \displaystyle\frac{4 K (K+N)}{K-N}
    \Lip\Rad{\mathcal{G}}
    + \displaystyle\frac{K N}{K-N}
    \sqrt{\frac{2\ln(2/\delta)}{n}},& \text{if}~ N \leq \displaystyle\frac{K}{2} \\
    \displaystyle\frac{4 K (2 K - N)}{K - N}\Lip\Rad{\mathcal{G}}
    + K\sqrt{\displaystyle\frac{2\ln(2/\delta)}{n}}, & \text{otherwise}
    \end{cases}
    \label{eq:ova-error}
\end{align}
Similarly, for pairwise classification, the following equation holds with a probability of at least $1-\delta$.
\begin{align}
    \err
    &\leq \frac{8 K (K-1)^{2}}{K - N}\Lip\Rad{\mathcal{G}}
    +  N(K-1)\sqrt{\frac{2\ln{(2/\delta)}}{n}}
    \label{eq:pc-error}
\end{align}
\end{theorem}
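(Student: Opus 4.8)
The plan is to follow the classical route from uniform deviation to a Rademacher bound, feeding in Lemmas~\ref{lem:candloss-norm} and~\ref{lem:rademacher-bound} for the two quantities that drive it. Throughout, write $\lambda = \frac{K-1}{K-N}$, so that by~\eqref{eq:simple-cand-risk} we have $R(f)=\lambda\,\Expected[\CandP{x,Y}]{\CandLoss{f(x),Y}}$ and $\hat R(f)=\frac{\lambda}{n}\sum_{i=1}^{n}\CandLoss{f(x_i),Y_i}$. First I would reduce the error~\eqref{eq:error} to a uniform deviation. Since $\hat f$ minimizes $\hat R$, the term $\hat R(\hat f)-\hat R(f^*)$ is nonpositive, and the standard three-way split gives
\begin{align*}
    \err
    = R(\hat f)-R(f^*)
    \leq \sup_{f}\bigl(R(f)-\hat R(f)\bigr)
    + \sup_{f}\bigl(\hat R(f)-R(f)\bigr),
\end{align*}
leaving two one-sided suprema over the loss class $\mathcal{H}_{\text{OVA}}$ or $\mathcal{H}_{\text{PC}}$ of Lemma~\ref{lem:rademacher-bound}.

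Next I would apply McDiarmid's bounded-differences inequality to each supremum. The required input is the sensitivity of $\hat R$ to a single datum: replacing one $(x_i,Y_i)$ perturbs $\hat R(f)$ by at most $\frac{\lambda}{n}\|\CandLoss{}\|_{\infty}$, which is exactly the oscillation computed in Lemma~\ref{lem:candloss-norm}. McDiarmid then yields, on each side with probability at least $1-\delta/2$,
\begin{align*}
    \sup_{f}\bigl(R(f)-\hat R(f)\bigr)
    \leq \Expected[\mathcal{S}]{\sup_{f}\bigl(R(f)-\hat R(f)\bigr)}
    + \lambda\,\|\CandLoss{}\|_{\infty}\sqrt{\frac{\ln(2/\delta)}{2n}},
\end{align*}
and a union bound over the two sides produces the $\ln(2/\delta)$ seen in~\eqref{eq:ova-error} and~\eqref{eq:pc-error}. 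The expected supremum is then handled by the usual symmetrization step, which bounds $\Expected[\mathcal{S}]{\sup_{f}(R(f)-\hat R(f))}$ by $2\lambda\Rad{\mathcal{H}}$; invoking Lemma~\ref{lem:rademacher-bound} rewrites $\Rad{\mathcal{H}}$ as the appropriate multiple of $\Lip\Rad{\mathcal{G}}$.

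Collecting the contributions from both sides gives a master bound of the form $\err \leq 4\lambda\Rad{\mathcal{H}} + \lambda\,\|\CandLoss{}\|_{\infty}\sqrt{2\ln(2/\delta)/n}$. Substituting the case-dependent values of $\Rad{\mathcal{H}}$ and $\|\CandLoss{}\|_{\infty}$ supplied by the two lemmas and cancelling $\lambda=\frac{K-1}{K-N}$ against the $(K-1)^{-1}$ factors inside them reproduces~\eqref{eq:ova-error} and~\eqref{eq:pc-error} term by term, with the split at $N\le K/2$ inherited directly from Lemmas~\ref{lem:candloss-norm} and~\ref{lem:rademacher-bound}. I expect the main obstacle to be bookkeeping rather than anything conceptual: the two factors of two---one from the two-sided decomposition, one from symmetrization---and the interplay of $\lambda$ with the hidden $(K-1)^{-1}$ scaling must be tracked exactly, so that, for instance, the pairwise concentration term collapses to $N(K-1)\sqrt{2\ln(2/\delta)/n}$ and the one-versus-all leading coefficient becomes $4K(K+N)/(K-N)$ rather than a nearby but wrong constant. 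The one genuinely delicate point is confirming that the bounded-differences constant equals the label-set oscillation $\|\CandLoss{}\|_{\infty}$ of Lemma~\ref{lem:candloss-norm}, since it is this---rather than the a priori larger total range of the loss---that makes the stated concentration terms tight.
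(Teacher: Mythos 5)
Your proposal is correct and takes essentially the same approach as the paper's own proof: a two-sided uniform-deviation decomposition, McDiarmid's inequality with per-datum sensitivity $\frac{K-1}{n(K-N)}\|\CandLoss{}\|_{\infty}$ supplied by Lemma~\ref{lem:candloss-norm}, symmetrization, and the Rademacher bound of Lemma~\ref{lem:rademacher-bound}, assembled into the same master bound $\err\leq 4\lambda\Rad{\mathcal{H}}+\lambda\|\CandLoss{}\|_{\infty}\sqrt{2\ln(2/\delta)/n}$ with $\lambda=\frac{K-1}{K-N}$, whose specializations reproduce \eqref{eq:ova-error} and \eqref{eq:pc-error} exactly. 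Your explicit union bound over the two one-sided suprema is, if anything, slightly more careful than the paper, which states the McDiarmid step for only one side before invoking the two-sided supremum.
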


Note that the upper-bound of $\err$ increases monotonically corresponding to $N$ in accordance with \eqref{eq:ova-error} and \eqref{eq:pc-error}.
This aspect is in agreement with the fact that a decrease in the number of candidate labels leads to less ambiguous supervision of the training data. 
The upper-bound in \eqref{eq:ova-error} breaks subject to $N\leq K/2$; this property is due to the equivalency satisfied by ordinary labels and complementary labels, as discussed in Section \ref{subsec:gen-probability}.
Note that taking the expectation of the error bounds in the theorem \ref{thm:errbound} assuming that $N$ as a random variable, the derived formulations specify the error bounds under the problem setting where the number of candidate labels provided to each training data stochastically fluctuates.

\section{Experiment}\label{sec:experiment}
This section describes the evaluation of the accuracy of one-versus-all and pairwise classification and the validation of the formulation discussed in Section \ref{subsec:error-bound}. Understanding the exact behavior of the classification error only from the derived equations in Theorem~\ref{thm:errbound} is difficult.
Therefore, we attempt to quantitively discuss the error in a real-world classification problem.
The source code for the described experiment is available online\footnote{\url{https://github.com/YasuhiroKatsura/ord-comp}}.

\begin{figure*}[t]\centering
    \subfigure[$K=10$ classification]{
		\includegraphics[keepaspectratio, width=1.00\linewidth]{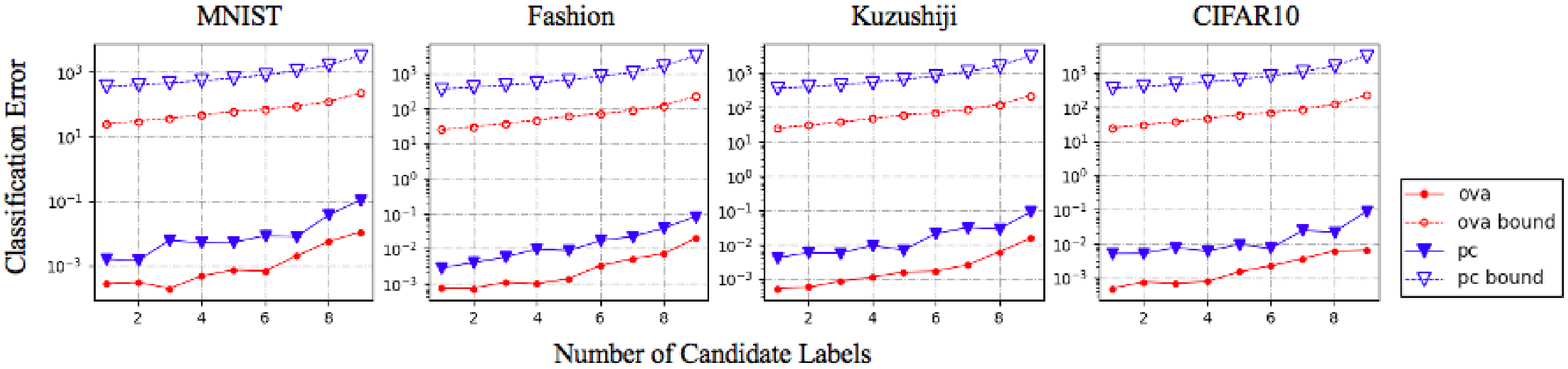}}

	\subfigure[$K=5$ classification]{
		\includegraphics[keepaspectratio, width=1.00\linewidth]{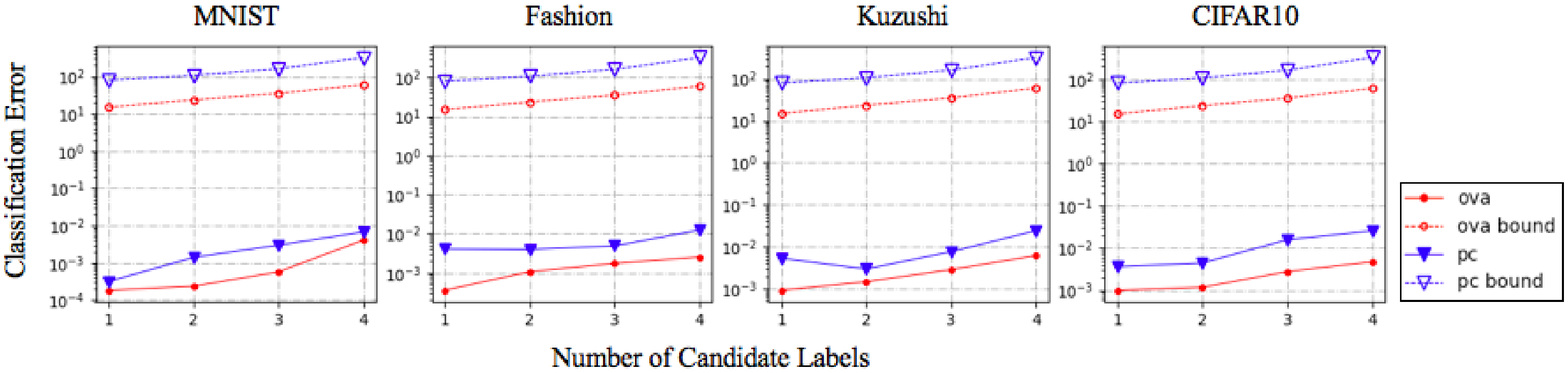}}
    \caption{
    Error for $10$ classification for different numbers of complementary labels $N$.
    The red and blue closed plots represent the experimental results for the one-versus-all and pairwise classification, respectively.
    The red and blue open plots represent the theoretical error bounds for the one-versus-all and pairwise classification, respectively.}\label{fig:error}
\end{figure*}

\subsection{Dataset Generation}
The generation probability model of the training data provided with multiple candidate labels is as defined in \eqref{eq:CandP}.
We generate the experimental dataset according to this definition, and the annotation of the training data is performed as follows.
First, we prepare pretrained $K$ class classifiers, namely, \textit{annotators}, for the ordinarily labeled training data.
The annotators are multi-layer perceptrons (MLP) with softmax as the output layer activation function. The annotators are trained on the MNIST\footnote{\url{http://yann.lecun.com/exdb/mnist/}}, Fashion-MNIST\footnote{\url{https://github.com/zalandoresearch/fashion-mnist}}, Kuzushiji-MNIST\footnote{\url{https://github.com/rois-codh/kmnist}}, and CIFAR-10\footnote{\url{https://www.cs.toronto.edu/~kriz/cifar.html}} datasets.
Table \ref{tab:dataset} summarizes the datasets.
Because all these cases correspond to the $10$ class ($y \in \left\{ 0, \cdots, 9\right\}$) classification problem, the data belonging to $y = 5, \cdots, 9$ is eliminated when we assume $K=5$ in the following experiment.
Because the discriminant functions of the annotators $\seq{g}{K}$ are normalized using softmax, each of the functions can be treated as having a data generation probability $P(y|x)$.
Therefore, assuming that the number of candidate labels is $N$, we generate $Y$ in accordance with $\CandP{f(x), Y}$, calculated using the discriminant functions.
Similarly, for the test dataset, the true label $y\in\mathcal{Y}$ is provided in accordance with $\seq{g}{K}$ of the annotators.

In the experiment, we pretrained the annotators for $K=10$ and $K=5$.
When $K=10$, the classification accuracies of the annotators for the MNIST, Fashion-MNIST, Kuzushiji-MNIST, and CIFAR-10 datasets are $99.17\%$, $92.67\%$, $95.1\%$, and $81.62\%$, respectively.
For $K=5$, the corresponding accuracies are $99.90\%$, $95.16\%$, $96.56\%$, and $85.60\%$.
As all the accuracies are reasonably high, the generation probability $P(y|x)$ for each class $y\in\mathcal{Y}$ is not uniform.
As discussed in section \ref{subsec:data-generation}, providing a set of candidate labels generated from \eqref{eq:CandP} is equivalent to providing a single ordinary label generated from the true probability distribution $P(y|x)$ with random noise from uniform distribution.
Therefore, generating synthetic data as in the experiment is natural in the context of preserving privacy of training data.

Here, we show the data augmentation algorithm used in the experiment in \algorithmref{alg:data-augmentation}.
\algorithmref{alg:data-augmentation} requires $n$ patterns $\{x_i\}_{i=1}^{n}$ as inputs and returns $\{x_i, Y_i\}_{i=1}^{n}$ by providing a set of candidate labels to each of input patterns.
Further, we define \texttt{Annotator} as a pretrained multi-layer perceptron which returns degree of classification confidence for each patterm $x_i$ and \texttt{ChooseCandidates} as a stochastic function which returns a set of candidate labels based on \eqref{eq:CandP}.
\begin{table}[t]\centering
\caption{
Datasets used in the experiment.
}\scalebox{1.0}{
  \begin{tabular}{c|c c c c} \hline
     & MNIST & Fashion & Kuzushiji & CIFAR-10 \\ \hline
     Number of classes & 10 & 10 & 10 & 10 \\
     Number of dimensions & 28$\times$28 & 28$\times$28 & 28$\times$28 & 3$\times$32$\times$32 \\
      & (gray) & (gray) & (gray) & (RGB) \\
     Number of data & 60,000 & 60,000 & 60,000 & 60,000 \\ \hline
  \end{tabular}
  }\label{tab:dataset}
\end{table}
\begin{algorithm2e}[t]
\caption{Data Augmentation}
\label{alg:data-augmentation}
\KwIn{the number of the total classes $K$, the number of candidate labels $N$, set of $n$ patterns $\{x_i\}_{i=1}^{n}$}
\KwOut{set of training data with candidate labels $\{x_i, Y_i\}_{i=1}^{n}$}
\SetKwFunction{FnAnnotator}{Annotator}
    \SetKwProg{Fn}{Function}{:}{}
    \Fn{\FnAnnotator{$x$}}{
    Calculate classification confidence $p_i\in[0, 1]$ for each class ($i=1, \cdots, K$).\\
    \KwRet $(p_1, \cdots, p_K)$ 
}
\SetKwFunction{FnChooseCandidates}{ChooseCandidates}
    \SetKwProg{Fn}{Function}{:}{}
    \Fn{\FnChooseCandidates{$\bm{p}$}}{
    Calculate generation probability by \eqref{eq:CandP} for all the possible combinations of $N$ candidate labels.\\
    Then choose a set of candidate labels from the combinations randomly.\\
    \KwRet $(y_1, \cdots, y_N)$
}
\For{$i\leftarrow 1$ \KwTo $n$}{
$\bm{p} \leftarrow$ \FnAnnotator{$x_i$}\\
$Y_i \leftarrow$ \FnChooseCandidates{$\bm{p}$}
}
\KwRet $\{x_i, Y_i\}_{i=1}^{n}$
\end{algorithm2e}

\begin{table*}[t]\centering
\caption{
  Experimental classification accuracies for $10$ and $5$ class classification (\%).
  The experiments were performed $5$ times for each case; the mean accuracy and standard deviation are presented by the upper and lower values, respectively.
  The highest accuracy is boldfaced.
}\scalebox{0.62}{
  \begin{tabular}{c|c c c c c c c c c|c c c c} \hline
    &&&&& $K=10$ &&&&&& $K=5$ \\
    N & 1 & 2  & 3 & 4 & 5 & 6 & 7 & 8 & 9 & 1 & 2 & 3 & 4 \\ \hline
    MNIST &\bf{92.28} &88.22 &89.44 &86.86 &80.72 &85.99 &77.28 &72.27 &62.13
    &\bf{98.06} &97.77 &97.06 &95.40  \\
    (OVA) &($\pm$3.45) &($\pm$6.97) &($\pm$4.23) &($\pm$4.16) &($\pm$6.05) &($\pm$4.82) &($\pm$5.02) &($\pm$6.42) &($\pm$11.03) 
    &($\pm$0.12) &($\pm$0.18) &($\pm$0.20) &($\pm$0.18)  \\
    MNIST &\bf{92.38} &91.58 &85.94 &87.38 &84.93 &83.93 &84.79 &71.73 &67.07 
    &\bf{98.12} &97.61 &97.12 &95.80  \\
    (PC) &($\pm$3.42) &($\pm$3.57) &($\pm$6.57) &($\pm$4.26) &($\pm$6.53) &($\pm$3.57) &($\pm$3.94) &($\pm$4.63) &($\pm$4.66) 
    &($\pm$0.13) &($\pm$0.14) &($\pm$0.16) &($\pm$0.29)  \\ \hline
    Fashion &\bf{81.07} &80.66 &80.43 &80.62 &79.64 &78.99 &75.10 &74.89 &71.41 
    &\bf{86.42} &85.63 &84.94 &83.37  \\
    (OVA) &($\pm$0.09) &($\pm$0.32) &($\pm$1.05) &($\pm$1.15) &($\pm$1.03) &($\pm$1.49) &($\pm$5.58) &($\pm$2.61) &($\pm$2.59) 
    &($\pm$0.26) &($\pm$0.83) &($\pm$0.41) &($\pm$0.37)\\
    Fashion &\bf{81.47} &81.51 &80.41 &81.23 &79.24 &79.88 &78.45 &75.64 &73.11 
    &\bf{86.52} &85.74 &84.74 &83.14 \\
    (PC) &($\pm$0.16) &($\pm$0.95) &($\pm$0.16) &($\pm$1.32) &($\pm$0.38) &($\pm$1.29) &($\pm$0.68) &($\pm$3.79) &($\pm$2.30) 
    &($\pm$0.33) &($\pm$0.44) &($\pm$0.37) &($\pm$0.41)  \\ \hline
    Kuzushi &\bf{64.19} &60.83 &58.96 &57.45 &52.20 &51.28 &49.43 &41.27 &37.09 
    &\bf{80.49} &79.02 &75.78 &70.55  \\
    (OVA) &($\pm$2.45) &($\pm$2.33) &($\pm$5.90) &($\pm$3.55) &($\pm$2.96) &($\pm$1.60) &($\pm$5.65) &($\pm$3.46) &($\pm$3.76) 
    &($\pm$0.33) &($\pm$0.61) &($\pm$0.25) &($\pm$1.08)  \\
    Kuzushi &\bf{66.15} &63.67 &58.53 &57.20 &55.96 &54.28 &49.74 &43.62 &34.82 
    &\bf{80.97} &78.78 &76.53 &70.23  \\
    (PC) &($\pm$0.48) &($\pm$2.06) &($\pm$5.82) &($\pm$4.09) &($\pm$3.44) &($\pm$1.95) &($\pm$4.02) &($\pm$0.66) &($\pm$3.07) 
    &($\pm$0.22) &($\pm$0.47) &($\pm$0.65) &($\pm$1.79)  \\ \hline
    CIFAR-10 &\bf{57.17} &53.44 &51.16 &45.88 &37.95 &34.85 &25.54 &27.43 &24.18 
    &\bf{65.72} &63.76 &61.30 &52.65  \\
    (OVA) &($\pm$3.30) &($\pm$4.95) &($\pm$2.01) &($\pm$4.42) &($\pm$5.96) &($\pm$4.39) &($\pm$3.80) &($\pm$3.36) &($\pm$3.63) 
    &($\pm$0.37) &($\pm$0.36) &($\pm$0.56) &($\pm$4.83)  \\
    CIFAR-10 &\bf{58.30} &49.04 &41.63 &35.68 &33.29 &27.62 &22.26 &22.36 &17.32 
    &\bf{69.11} &66.98 &62.10 &56.29  \\
    (PC) &($\pm$2.89) &($\pm$3.56) &($\pm$6.24) &($\pm$4.64) &($\pm$3.10) &($\pm$3.43) &($\pm$3.58) &($\pm$4.35) &($\pm$1.20) 
    &($\pm$2.59) &($\pm$3.72) &($\pm$1.47) &($\pm$2.36)  \\ \hline
  \end{tabular}
  }
  \label{tab:accuracy}
\end{table*}

\begin{table*}[t]\centering
\caption{
  Experimental classification errors for $10$ and $5$ class classification ($\times 10^{-3}$).
  The experiments were performed $5$ times for each case; the mean error and standard deviation are presented by the upper and lower values, respectively.
  The highest error is boldfaced.
}\scalebox{0.6}{
  \begin{tabular}{c|c c c c c c c c c|c c c c} \hline
    &&&&& $K=10$ &&&&&& $K=5$ \\
    N & 1 & 2  & 3 & 4 & 5 & 6 & 7 & 8 & 9 & 1 & 2 & 3 & 4 \\ \hline
    MNIST &0.27 &0.30 &0.19 &0.47 &0.72 &0.67 &1.96 &5.47 &\bf{10.69}
    &0.18 &0.24 &0.57 &\bf{4.07}  \\
    (OVA) &($\pm$0.19) &($\pm$0.19) &($\pm$0.18) &($\pm$0.29) &($\pm$0.40) &($\pm$0.34) &($\pm$0.99) &($\pm$3.92) &($\pm$5.80) 
    &($\pm$0.19) &($\pm$0.18) &($\pm$0.89) &($\pm$1.57)  \\
    MNIST &1.56 &1.41 &6.00 &4.94 &5.15 &8.22 &7.78 &35.94 &\bf{106.35} 
    &0.32 &1.40 &2.99 &\bf{6.74}  \\
    (PC) &($\pm$1.33) &($\pm$1.07) &($\pm$3.79) &($\pm$2.72) &($\pm$3.41) &($\pm$4.78) &($\pm$6.75) &($\pm$32.09) &($\pm$92.07) 
    &($\pm$0.23) &($\pm$0.94) &($\pm$1.15) &($\pm$2.20)  \\ \hline
    Fashion &0.72 &0.71 &1.08 &0.98 &1.36 &3.19 &4.98 &7.01 &\bf{19.51} 
    &0.36 &1.07 &1.75 &\bf{2.53}  \\
    (OVA) &($\pm$0.48) &($\pm$0.37) &($\pm$0.27) &($\pm$0.73) &($\pm$1.26) &($\pm$1.06) &($\pm$3.64) &($\pm$2.30) &($\pm$10.60) 
    &($\pm$0.25) &($\pm$0.61) &($\pm$1.17) &($\pm$1.76)\\
    Fashion &2.78 &3.86 &5.54 &9.55 &8.59 &16.56 &21.51 &37.45 &\bf{81.23} 
    &4.09 &3.99 &4.88 &\bf{12.46} \\
    (PC) &($\pm$2.30) &($\pm$3.70) &($\pm$2.21) &($\pm$3.70) &($\pm$8.09) &($\pm$10.85) &($\pm$11.12) &($\pm$28.52) &($\pm$50.82) 
    &($\pm$2.64) &($\pm$4.05) &($\pm$3.60) &($\pm$6.29)  \\ \hline
    Kuzushi &0.52 &0.56 &0.85 &1.11 &1.57 &1.66 &2.49 &6.01 &\bf{15.26} 
    &0.91 &1.46 &2.85 &\bf{6.10}  \\
    (OVA) &($\pm$0.37) &($\pm$0.45) &($\pm$0.58) &($\pm$0.45) &($\pm$0.30) &($\pm$0.63) &($\pm$1.85) &($\pm$3.20) &($\pm$7.67) 
    &($\pm$0.74) &($\pm$0.96) &($\pm$1.45) &($\pm$2.54)  \\
    Kuzushi &4.06 &5.73 &5.60 &9.27 &6.88 &20.39 &31.05 &27.94 &\bf{89.97} 
    &5.31 &2.94 &7.54 &\bf{24.13}  \\
    (PC) &($\pm$3.79) &($\pm$7.44) &($\pm$4.07) &($\pm$6.34) &($\pm$4.26) &($\pm$16.62) &($\pm$20.92) &($\pm$14.73) &($\pm$17.50) 
    &($\pm$1.61) &($\pm$2.95) &($\pm$6.36) &($\pm$20.90)  \\ \hline
    CIFAR-10 &0.48 &0.74 &0.67 &0.76 &1.47 &2.18 &3.54 &5.98 &\bf{6.33} 
    &1.01 &1.18 &2.79 &\bf{4.76}  \\
    (OVA) &($\pm$0.25) &($\pm$0.39) &($\pm$0.19) &($\pm$0.17) &($\pm$0.75) &($\pm$1.12) &($\pm$1.99) &($\pm$2.43) &($\pm$1.61) 
    &($\pm$0.66) &($\pm$0.74) &($\pm$1.54) &($\pm$2.97)  \\
    CIFAR-10 &5.02 &5.08 &7.51 &5.92 &9.39 &7.15 &23.80 &21.19 &\bf{86.60}
    &3.65 &4.43 &15.77 &\bf{25.66}  \\
    (PC) &($\pm$0.51) &($\pm$2.38) &($\pm$1.46) &($\pm$5.00) &($\pm$4.70) &($\pm$5.09) &($\pm$24.52) &($\pm$11.01) &($\pm$38.95) 
    &($\pm$0.72) &($\pm$2.40) &($\pm$7.76) &($\pm$8.08)  \\ \hline
  \end{tabular}
  }
  \label{tab:error}
\end{table*}
\subsection{Evaluation of Classification Error Bound}\label{subsec:data-generation}
{\bf Experimental Setup: }
Assuming that the number of classes $K$ is $10$ or $5$, the classification accuracy is compared under different numbers of candidate labels $N\in\left\{1, \cdots, K-1\right\}$.
The loss functions $\CandLoss[\text{OVA}]{}$ and $\CandLoss[\text{PC}]{}$ are defined using \eqref{eq:simple-CandLoss}, and the function $\loss{}$ is an origin-symmetric sigmoid loss, i.e., $\loss{z} = (1 + e^{z})^{-1} - 1/2$.
We set the number of training data for each class as $1,000$, and the data were randomly selected from the complete dataset.
We set the batch size as $64$ for the rest of the experiment.
We conducted the experiment using MLP for the MNIST, Fashion-MNIST, and Kuzushiji-MNIST datasets.
The number of epochs was $300$, weight decay was $10^{-4}$, learning rate was $5 \times 10^{-4}$, and \textit{Adam} was used as the optimization algorithm \cite{kingma2015}.
For CIFAR-10, the experiment was performed using DenseNet \cite{Huang2016cvpr}.
The number of epochs was $300$, weight decay was $5 \times 10^{-4}$, momentum was $0.9$, and the optimization algorithm was the \textit{stochastic gradient descent}.
The initial learning rate was set as $10^{-2}$, and it was halved every $30$ epochs.
The range of discriminant functions $g_{y}$ was $[-1/2, 1/2]$ for both MLP and DenseNet.

{\bf Error Computation: }
The classification error was calculated according to \eqref{eq:error}.
Because the exact $\hat{f}$ defined in Section \ref{subsec:notation} was not available, we surrogated it with a classifier that could minimize the loss of the training data in the experiment.
Similarly, we substituted $f^{*}$ with a classifier that could minimize the loss of the test data.

Additionally, we computed the theoretical classification error bounds according to \eqref{eq:ova-error} and \eqref{eq:pc-error}.
We set $\delta=0.1$ because Theorem~\ref{thm:errbound} holds with a high probability if $\delta$ is relatively small.
Furthermore, we set $\Lip$ as $1.0$ because it is the minimum Lipschitz constant of the shifted sigmoid loss.
Assuming that both MLP and DenseNet in the experiment had adequate capacity, we set $\Rad{\mathcal{G}}=0.5$.

{\bf Results: }
We performed $5$ trials for each experiment and observed the classification accuracy, that is, rate of correct classification for the test dataset estimated by a classifier which minimizes loss for training data.
The mean and standard deviation for the accuracy are listed in Table \ref{tab:accuracy}.
The results indicate that the mean of the accuracy tends to monotonically decrease corresponding to $N$.
That is, increase in the number of complementary labels leads to a better performance than the case of $N=K-1$ ($M=1$).

The mean and standard deviation of the experimental classification error are listed in Table \ref{tab:error}.
Furthermore, the mean of the experimental error and theoretical error bounds are shown in Figure \ref{fig:error} as a logarithmic graph.
The results indicate that the theoretical error bound for one-versus-all and pairwise classification are about equally tight.
The experimental error tends to monotonically increase corresponding to $N$, which is in accordance with the discussion in Section \ref{subsec:error-bound}.
The increase in theoretical error bound and experimental error are similar in shape, indicating that the bound reflects qualitative property found in the experimental error.
\section{Conclusion}\label{sec:conclusion}
Focusing on the fact that ordinary label and complementary label are essentially equivalent, we naturally related learning framework from single ordinary/complementary label.
We found that the loss functions for one-versus-all and pairwise classification satisfy certain \textit{additivity} and \textit{duality} which play a significant role in the analysis of classification risk and error bound.
As a result, we made it possible to understand learning from ordinary/complementary label, which are studied independently in a existing literature, as a new expression form in a unified perspective.

The analysis in this paper is under the the assumption that each training data is provided from a data-generation probability model which satisfies a certain uniformity of the assigned labels.
All the existing literature studying learning framework from multiple complementary labels are under the same assumption.
This assumption is natural in a context of privacy preserving.
As we described in this paper, the defined data-generation probability model indicates that providing multiple candidate labels generated from the model helps to prevent leakage of information about true distribution of labels for any patterns.
However, it is also interesting to analyze under the problem setting where the assumption does not hold.
Studies on the data-generation probability models to expand the range of application is the important future work.



\newpage
\appendix
\section{Proofs of Loss Function Properties}
As stated in Section \ref{sec:formulation}, if we define a loss function $\CandLoss{}$ in an additive form by \eqref{eq:CandLoss}, there exist constants $\xi_1$ and $\xi_2$ which satisfy \eqref{eq:ova-Cand-loss} and \eqref{eq:pc-Cand-loss}.
We first describe proof of \eqref{eq:ova-Cand-loss}.
The following formulation holds in accordance with $\loss{z}+\loss{-z}=a$.
\begin{align*}
    &\sum\limits_{y\in Y}\ordLoss[\text{OVA}]{f(x), y}\\
    &=\sum\limits_{y\in Y}\left(
    \loss{g_{y}(x)}+\frac{1}{K-1}\sum\limits_{y'\neq y}\loss{-g_{y'}(x)}
    \right)\\
    &=\sum\limits_{y\in Y}\left(
    \frac{K}{K-1}\loss{g_y(x)}-\frac{1}{K-1}\loss{g_y(x)}+\frac{1}{K-1}\sum\limits_{y'\in\mathcal{Y}}\loss{-g_{y'}(x)}-\frac{1}{K-1}\loss{-g_{y}(x)}
    \right)\\
    &=\frac{1}{K-1}\sum\limits_{y\in Y}\left(
    K\loss{g_{y}(x)}
    + \sum\limits_{y'\in\mathcal{Y}}\loss{-g_{y'}(x)}
    -a
    \right)\\
    &=\frac{1}{K-1}\left(
    K\sum_{y\in Y}\loss{g_{y}(x)}
    + \sum\limits_{y\in Y}\sum\limits_{y'\in\mathcal{Y}}\loss{-g_{y'}(x)}
    -\sum\limits_{y\in Y}a
    \right)\\
    &=\frac{1}{K-1}\left(
    (K-N+N)\sum\limits_{y\in Y}\loss{g_{y}(x)}
    +N\sum\limits_{y\in Y}\loss{-g_{y}(x)}
    +N\sum\limits_{y'\notin Y}\loss{-g_{y'}(x)}
    -a N
    \right)\\
    &=\frac{1}{K-1}\left(
    (K-N)\sum\limits_{y\in Y}\loss{g_{y}(x)}
    +N\sum\limits_{y'\notin Y}\loss{-g_{y'}(x)} +a N(N-1)
    \right)\\
    &=\frac{K-N}{K-1}\sum\limits_{y\in Y}\loss{g_{y}(x)}
    + \frac{N}{K-1}\sum\limits_{y'\notin Y}\loss{-g_{y'}(x)}
    + \frac{a N(N-1)}{K-1}\\
    &=\CandLoss[\text{OVA}]{f(x), Y} + \frac{a N(N-1)}{K-1}
\end{align*}
\qed

Next, we describe proof of \eqref{eq:pc-Cand-loss}.
Similar to the above formulation, in accordance with $\loss{z}+\loss{-z}=a$,
\begin{align*}
    &\sum\limits_{y\in Y}\ordLoss[\text{PC}]{f(x), y}\\
    &= \sum\limits_{y\in Y}\sum\limits_{y'\neq y}\loss{g_{y}(x)-g_{y'}(x)}\\
    &=\sum\limits_{y\in Y}\sum\limits_{y'\in\mathcal{Y}}\loss{g_{y}(x)-g_{y'}(x)}
    -\sum\limits_{y\in Y}\loss{g_{y}(x)-g_{y}(x)}\\
    &= \sum\limits_{y\in Y}\sum\limits_{y'\notin Y}\loss{g_y(x)-g_{y'}(x)}
    + \sum_{\substack{y, y'\in Y\\y=y'}}\loss{g_y(x)-g_{y'}(x)}
    + \sum_{\substack{y, y'\in Y\\y\neq y'}}\loss{g_y(x)-g_{y'}(x)}
    -\frac{a N}{2}\\
    &= \sum\limits_{y\in Y}\sum\limits_{y'\notin Y}\loss{g_y(x)-g_{y'}(x)}
    + a\cdot\comb{N}{2}\\
    &= \CandLoss[\text{PC}]{f(x), Y} + a\cdot\comb{N}{2}
\end{align*}
The third and fourth equality hold due to $\loss{0}=a/2$.
\qed
\section{Property of the Data-Generation Probability Model}
In this section, we describe proof of \eqref{eq:privacy}.
To begin with, we introduce the following lemma.
\begin{lemma}\label{lem:combination}
Let any finite sets be $X$ with size of $K$, and any elements of the $N$-size power set $\PowerSet{X}$ be $A$.
Then, the following equation holds for any function $f$ and $g$ over $X$.
\begin{align}
    \sum\limits_{A\in\PowerSet{X}}
    \sum\limits_{a_1, a_2\in A}f(a_1)g(a_2)
    &= \comb{K-2}{N-2}\sum_{
    \substack{x_1, x_2\in X\\x_1\neq x_2}}f(x_1)g(x_2)
    + \comb{K-1}{N-1}\sum_{
    \substack{x_1, x_2\in X\\x_1=x_2}}f(x_1)g(x_2)
    \label{eq:combination}
\end{align}
\end{lemma}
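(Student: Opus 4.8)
The plan is to prove \eqref{eq:combination} by interchanging the order of summation and then, for each fixed ordered pair $(x_1, x_2) \in X \times X$, counting how many $N$-size subsets $A \in \PowerSet{X}$ contain both $x_1$ and $x_2$. First I would rewrite the left-hand side as
\begin{align*}
\sum_{A \in \PowerSet{X}} \sum_{a_1, a_2 \in A} f(a_1) g(a_2)
= \sum_{x_1, x_2 \in X} \left( \sum_{A \in \PowerSet{X}} I(x_1 \in A)\, I(x_2 \in A) \right) f(x_1) g(x_2),
\end{align*}
where $I$ is the indicator function, so that the bracketed factor equals the number of $N$-subsets $A$ simultaneously containing $x_1$ and $x_2$. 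This interchange is legitimate because all sums are finite.

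Next I would evaluate that count in the two cases appearing on the right-hand side. When $x_1 \neq x_2$, a subset $A$ must include these two fixed elements and choose its remaining $N-2$ elements from the other $K-2$ elements of $X$, giving $\comb{K-2}{N-2}$ such subsets. When $x_1 = x_2$, the condition collapses to $x_1 \in A$, so $A$ contains one fixed element and chooses $N-1$ more from the remaining $K-1$, giving $\comb{K-1}{N-1}$ subsets. Substituting these two values and splitting the double sum over $X \times X$ into its off-diagonal part ($x_1 \neq x_2$) and diagonal part ($x_1 = x_2$) reproduces exactly \eqref{eq:combination}.

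Since this is essentially a double-counting identity, no single step is a genuine obstacle once the interchange of summation is in place. The only point requiring care is the bookkeeping of the two binomial coefficients and the clean separation of the diagonal term $x_1 = x_2$ from the off-diagonal term, because the inner sum $\sum_{a_1, a_2 \in A}$ ranges over \emph{all} ordered pairs drawn from $A$, including those with $a_1 = a_2$. Keeping track of this is exactly what produces the distinct coefficients $\comb{K-2}{N-2}$ and $\comb{K-1}{N-1}$ on the two parts of the right-hand side.
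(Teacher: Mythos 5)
Your proof is correct and takes essentially the same double-counting approach as the paper: both arguments split the sum into its diagonal ($a_1 = a_2$) and off-diagonal ($a_1 \neq a_2$) parts and identify the coefficient of each term $f(x_1)g(x_2)$ as the number of $N$-subsets of $X$ containing the given pair. The only cosmetic difference is that you obtain this count directly (fix the pair, choose the remaining elements, giving $\comb{K-2}{N-2}$ and $\comb{K-1}{N-1}$), whereas the paper derives the same coefficient by a symmetry argument as the ratio $\comb{K}{N}\cdot {}_{N}P_{2}\,/\,{}_{K}P_{2}$.
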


\begin{proof}
The left-hand side can be formulated as:
\begin{align*}
    \sum\limits_{A\in\PowerSet{X}}
    \sum\limits_{a_1, a_2\in A}f(a_1)g(a_2)
    &=\sum\limits_{A\in\PowerSet{X}}
    \sum_{\substack{a_1, a_2\in A\\a_1\neq a_2}}f(a_1)g(a_2)
    + \sum\limits_{A\in\PowerSet{X}}
    \sum_{\substack{a_1, a_2\in A\\a_1 = a_2}}f(a_1)g(a_2)
\end{align*}

For the first term, any $A$ can be chosen from $\PowerSet{X}$ in $\comb{K}{N}$ patterns, and any $a_1,~a_2~(a_1 \ne a_2)$ can be chosen from $A$ in ${}_{N}P_{2}$ patterns.
Because $f(a_1)g(a_2)$ with any $a_1,~a_2~(a_1 \ne a_2)$ can be chosen from $X$ in ${}_{K}P_{2}$ patterns, the first term in \eqref{eq:combination} holds due to $\comb{K-2}{N-2} = \frac{\comb{K}{N} \cdot {}_{N}P_{2}}{ {}_{K}P_{2}}$.
Similar for the second term.
\end{proof}

Here, we prove \eqref{eq:privacy}.
Let $P(\alpha\in Y|x)$ be the probability where a set of candidate labels $Y$ includes a label $\alpha\in\mathcal{Y}$.
Denoting $I$ as an indicator function, then
\begin{align*}
    P(\alpha\in Y)
    &= \sum\limits_{Y\in\PowerSet{\mathcal{Y}}}I(\alpha\in Y)\CandP{Y|x}\\
    &= \frac{1}{\comb{K-1}{N-1}}\sum\limits_{Y\in\PowerSet{\mathcal{Y}}}I(\alpha\in Y)\sum\limits_{y\in Y}P(y|x)\\
    &= \frac{1}{\comb{K-1}{N-1}}\sum\limits_{Y\in\PowerSet{\mathcal{Y}}}\sum\limits_{y_1, y_2\in Y}I(y_1=\alpha)P(y_2|x)\\
    &= \frac{1}{\comb{K-1}{N-1}}\left(\comb{K-1}{N-1}\sum\limits_{y\in\mathcal{Y}}I(y=\alpha)P(y|x) + \comb{K-2}{N-2}\sum_{
    \substack{y_1, y_2\in \mathcal{Y}\\y_1\neq y_2}}I(y_1=\alpha)P(y_2|x)\right)\\
    &= P(\alpha|x) + \frac{N-1}{K-1}\sum\limits_{y\neq\alpha}P(y|x)\\
    &= \frac{K-N}{K-1}P(\alpha|x)+\frac{N-1}{K-1}
\end{align*}
The fourth equality holds due to lemma~\ref{lem:combination}.
In the fifth equality, the second term holds because $y_2\neq\alpha$ holds if $y_1=\alpha$. 
Therefore, 
\begin{align*}
    Q(y_o=\alpha|x) &= Q(y_o=\alpha|x, \alpha\in Y)Q(\alpha\in Y|x)
    + Q(y_o=\alpha|x, \alpha\notin Y)Q(\alpha\notin Y|x)\\
    &= \frac{1}{N}\frac{K-N}{N-1}P(\alpha|x)+\frac{1}{N}\frac{N-1}{K-1}\\
    &= \beta P(\alpha|x) + (1-\beta)\frac{1}{K}
\end{align*}
The second equality holds due to the assumption where $Q(y_o=\alpha|x, \alpha\in Y)=1/N$ and $Q(y_o=\alpha|x, \alpha\notin Y)=0$ hold.
\qed
\section{Proof of Theorem \ref{thm:risk}}
We first derive the expectation of the sum of loss for all the ordinary labels.
\begin{align*}
    &\Expected[\CandP{Y|x}]{\sum\limits_{y\in Y}\ordLoss{f(x), y}}\\
    &=\sum\limits_{Y\in\PowerSet{\mathcal{Y}}}
    \sum\limits_{y\in Y}\ordLoss{f(x), y}\CandP{Y|x}\\
    &= \frac{1}{\comb{K-1}{N-1}}
    \sum\limits_{Y\in\PowerSet{\mathcal{Y}}}
    \sum\limits_{y\in Y}
    \sum\limits_{y'\in Y}
    \ordLoss{f(x), y'}\CandP{y|x}\\
    &=\frac{1}{\comb{K-1}{N-1}}\left\{
    \comb{K-2}{N-2}\sum\limits_{y\in\mathcal{Y}}\sum\limits_{y'\neq y}
    \ordLoss{f(x), y'}\CandP{y|x}
    +\comb{K-1}{N-1}\sum\limits_{y\in\mathcal{Y}}
    \ordLoss{f(x), y}\CandP{y|x}
    \right\}\\
    &= \frac{N-1}{K-1}\sum\limits_{y\in\mathcal{Y}}\sum\limits_{y'\neq y}
    \ordLoss{f(x), y'}\CandP{y|x}
    +\sum\limits_{y\in\mathcal{Y}}
    \ordLoss{f(x), y}\CandP{y|x}\\
    &= \frac{N-1}{K-1}\Expected[\OrdP{y|x}]{
    \sum\limits_{y'\in \mathcal{Y}}\ordLoss{f(x), y'}
    -\ordLoss{f(x), y}}
    + \Expected[\OrdP{y|x}]{
    \ordLoss{f(x), y}}\\
    &= \frac{N-1}{K-1}
    \sum\limits_{y\in \mathcal{Y}}\ordLoss{f(x), y}
    + \frac{K-N}{K-1}
    \Expected[\OrdP{y|x}]{\ordLoss{f(x), y}}
\end{align*}
The second equality holds due to the definition of $\CandP{Y|x}$.
The third equality holds due to Lemma~\ref{lem:combination}.
Thus, the following formulation holds due to \eqref{eq:CandLoss}.
\begin{align*}
    \Expected[\CandP{x, Y}]{\CandLoss{f(x), Y}}
    &= \xi_1\Expected[\CandP{x, Y}]{
    \sum\limits_{y\in Y}\ordLoss{f(x), y}}+\xi_2\\
    &= \frac{\xi_1 (K-N)}{K-1}
    \Expected[\OrdP{x, y}]{\ordLoss{f(x), y}}
    + \frac{\xi_1 (N-1)}{K-1}
    \sum\limits_{y\in\mathcal{Y}}\ordLoss{f(x), y}
    +\xi_2
\end{align*}
\qed

\section{Proof of Lemma \ref{lem:candloss-norm}}

According to the duality described in \eqref{eq:CandLoss} and \eqref{eq:CompLoss}, loss function $\CandLoss{}$ can be formulated by $\ordLoss{f(x), y}$ for $N$ candidate labels $y\in Y$, or $\compLoss{f(x), \overline{y}}$ for $K-N$ complementary labels $\overline{y}\in\overline{Y}$.
Thus, we can redefine loss function $\CandLoss{}$ as the following:
\begin{align*}
    \CandLoss{f(x), Y}
    &=\sum\limits_{y\in \widetilde{Y}}\tildeLoss{f(x), y}
    =\begin{cases}
    \sum\limits_{y\in Y}\ordLoss{f(x), y}, & \text{if $N\le \dfrac{K}{2}$}\\[.5em]
    \sum\limits_{\overline{y}\in\overline{Y}}\compLoss{f(x), \overline{y}}, & \text{otherwise}\\
    \end{cases}
\end{align*}
where $\tildeLoss{}$ and $\widetilde{Y}$ denote $\ordLoss{}$ and $Y$ respectively if $N\leq K/2$, otherwise $\compLoss{}$, $\overline{Y}$ respectively.
Therefore, given $\widetilde{N}:=|\widetilde{Y}|$ it always satisfies $\widetilde{N}\leq K/2$.
Note that $\xi_{2} = \overline{\xi}_{2} = 0$ due to the assumption of $a=0$, as discussed in Section \ref{subsec:loss-function}.
Similarly, $\tildeloss{z}$ denotes $\tildeloss{}: z\mapsto\loss{z}$ if $N\leq K/2$ otherwise $\tildeloss{}: z\mapsto\loss{-z}$.
For the rest of this work, we prove Lemma~\ref{lem:candloss-norm} according to those definitions.

First we describe proof for one-versus-all classification.
Under the assumption of $a=0$, the following formulation holds.
\begin{align*}
    \sum\limits_{y\in\widetilde{Y}}\tildeLoss[\text{OVA}]{f(x), y}
    &=\sum\limits_{y\in \widetilde{Y}}\left(
    \frac{K}{K-1}\tildeloss{g_{y}(x)} + \frac{1}{K-1}\sum\limits_{y'\in\mathcal{Y}}\tildeloss{-g_{y'}(x)}
    \right)
\end{align*}
Thus, the following formulation holds for any $\widetilde{Y}, \widetilde{Y}'\in\PowerSet{\mathcal{Y}}$.
\begin{align*}
    \|\CandLoss[\text{OVA}]{}\|_{\infty}
    &=\sup\limits_{\seq{g}{K}\in\mathcal{G}}\left(
    \sum\limits_{y\in \widetilde{Y}}\tildeLoss[{\text{OVA}}]{f(x), y}
    - \sum\limits_{y'\in \widetilde{Y}'}\tildeLoss[{\text{OVA}}]{f(x), y'}\right)\\
    &= \sup\limits_{\seq{g}{K}\in\mathcal{G}}\Biggl\{
    \frac{K}{K-1}\left(
    \sum\limits_{y\in \widetilde{Y}}\tildeloss{g_{y}(x)}
    - \sum\limits_{y'\in \widetilde{Y}'}\tildeloss{g_{y'}(x)}
    \right)\\
    &\hspace{1em}+\frac{1}{K-1}\left(
    \sum\limits_{y\in\mathcal{Y}}\tildeloss{-g_{y}(x)}
    - \sum\limits_{y\in\mathcal{Y}}\tildeloss{-g_{y}(x)}
    \right)\Biggr\}\\
    &\leq\frac{K}{K-1}\left(\sup\limits_{\seq{g}{K}\in\mathcal{G}}
    \sum\limits_{y\in \widetilde{Y}}\tildeloss{g_{y}(x)}
    - \inf\limits_{\seq{g}{K}}\sum\limits_{y'\in \widetilde{Y}'}\tildeloss{g_{y'}(x)}
    \right)\\
    &\leq\frac{K}{K-1}\left(
    \frac{\widetilde{N}}{2} + \frac{\widetilde{N}}{2}
    \right)\\
    &= \frac{K\widetilde{N}}{K-1}
\end{align*}
The second inequality holds because supremum and infimum of $\loss{}$ are $1/2$ and $-1/2$ respectively.
\qed

We further describe proof for pairwise classification.
Under the assumption of $a=0$, the following formulation holds.
\begin{align*}
    \sum\limits_{y\in\widetilde{Y}}\tildeLoss[\text{PC}]{f(x), y}
    &= \sum\limits_{y\in \widetilde{Y}}\sum\limits_{y'\notin \widetilde{Y}}\tildeloss{g_y(x)-g_{y'}(x)}
\end{align*}
Thus, 
\begin{align*}
    \|\CandLoss[\text{PC}]{}\|_{\infty}
    &=\sup\limits_{\seq{g}{K}\in\mathcal{G}}\left(
    \sum\limits_{y'\in \widetilde{Y}}\tildeLoss[{\text{PC}}]{f(x), y'}
    - \sum\limits_{y\in \widetilde{Y}'}\tildeLoss[{\text{PC}}]{f(x), y}\right)\\
    &=\sup\limits_{\seq{g}{K}\in\mathcal{G}}
    \sum\limits_{y\in \widetilde{Y}}\sum\limits_{y'\notin \widetilde{Y}}\tildeloss{g_y(x)-g_{y'}(x)}
    - \inf\limits_{\seq{g}{K}\in\mathcal{G}}
    \sum\limits_{y'\in \widetilde{Y}'}\sum\limits_{y\notin \widetilde{Y}'}\tildeloss{g_{y'}(x)-g_y(x)}\\
    &\leq\frac{\widetilde{N}(K-\widetilde{N})}{2}-\left(-\frac{\widetilde{N}(K-\widetilde{N})}{2}\right)\\
    &= \widetilde{N}(K-\widetilde{N})
\end{align*}
\qed

\section{Proof of Lemma \ref{lem:rademacher-bound}}
We decribe proof for one-versus-all classification.
Under the assumption that $h\in\mathcal{H}_{\text{OVA}}$ is equivalent to $\CandLoss[\text{OVA}]{}$, the following formulation holds due to the definition of $\mathcal{H}_{\text{OVA}}$.
\begin{align*}
    \Rad{\mathcal{H}_{{\text{OVA}}}}
    &= \Expected[\mathcal{S}][\mathcal{\sigma}]{
    \sup\limits_{h\in\mathcal{H}_{\text{OVA}}}\frac{1}{n}
    \sum_{i=1}^{n}
    \sigma_i h\left(x_i, Y_i\right)
    }\\
    &= \Expected[\mathcal{S}][\mathcal{\sigma}]{
    \sup\limits_{\seq{g}{K}\in\mathcal{G}}\frac{1}{n}
    \sum_{i=1}^{n}\sigma_i
    \sum\limits_{y\in \widetilde{Y}_i}\tildeLoss[{\text{OVA}}]{f(x_i), y}}\\
    &= \Expected[\mathcal{S}][\mathcal{\sigma}]{
    \sup\limits_{\seq{g}{K}\in\mathcal{G}}\frac{1}{n}
    \sum\limits_{i=1}^{n}
    \sigma_i \sum\limits_{y\in \widetilde{Y}_i}\left\{
    \frac{K}{K-1}\tildeloss{g_y(x_i)}
    +\frac{1}{K-1}\sum\limits_{y'\in \mathcal{Y}}\tildeloss{-g_{y'}(x_i)}
    \right\}}\\
    &\leq \frac{K}{K-1}\Expected[\mathcal{S}][\mathcal{\sigma}]{
    \sup\limits_{\seq{g}{K}\in\mathcal{G}}\frac{1}{n}
    \sum\limits_{i=1}^{n}
    \sigma_i \sum\limits_{y\in \widetilde{Y}_i}
    \tildeloss{g_y(x_i)}}\\
    &+\frac{1}{K-1}\Expected[\mathcal{S}][\mathcal{\sigma}]{
    \sup\limits_{\seq{g}{K}\in\mathcal{G}}\frac{1}{n}
    \sum\limits_{i=1}^{n}
    \sigma_i \sum\limits_{y\in \widetilde{Y}_i}
    \sum\limits_{y'\in \mathcal{Y}}\tildeloss{-g_{y'}(x_i)}
    }
\end{align*}
Let $I(y\in\widetilde{Y}_{i})$ be an indicator function and define $\alpha_{i} := 2 I(y\in\widetilde{Y}_{i}) - 1$, then for the first term, 
\begin{align*} 
    &\Expected[\mathcal{S}][\mathcal{\sigma}]{
    \sup\limits_{\seq{g}{K}\in\mathcal{G}}\frac{1}{n}
    \sum_{i=1}^{n}
    \sigma_i \sum\limits_{y\in \widetilde{Y}_i}
    \tildeloss{g_y(x_i)}}\\
    &=\Expected[\mathcal{S}][\mathcal{\sigma}]{
    \sup\limits_{\seq{g}{K}\in\mathcal{G}}\frac{1}{n}
    \sum_{i=1}^{n}
    \sigma_i \sum\limits_{y\in \mathcal{Y}}
    \tildeloss{g_y(x_i)}I(y\in\widetilde{Y}_{i})}\\
    &=\Expected[\mathcal{S}][\mathcal{\sigma}]{
    \sup\limits_{\seq{g}{K}\in\mathcal{G}}\frac{1}{2 n}
    \sum_{i=1}^{n}
    \sigma_i \sum\limits_{y\in \mathcal{Y}}
    \tildeloss{g_y(x_i)}(\alpha_{i} + 1)}\\
    &\leq \sum\limits_{y\in \mathcal{Y}}\Expected[\mathcal{S}][\mathcal{\sigma}]{
    \sup\limits_{\seq{g}{K}\in\mathcal{G}}\frac{1}{2 n}
    \sum_{i=1}^{n}
    \sigma_i \tildeloss{g_y(x_i)}(\alpha_{i} + 1)}\\
    &\leq \sum\limits_{y\in\mathcal{Y}}\left\{
    \Expected[\mathcal{S}][\mathcal{\sigma}]{
    \sup\limits_{\seq{g}{K}\in\mathcal{G}}\frac{1}{2 n}
    \sum_{i=1}^{n}
    \alpha_{i}\sigma_i \sum\limits_{y\in \mathcal{Y}}
    \tildeloss{g_y(x_i)}}
    + \Expected[\mathcal{S}][\mathcal{\sigma}]{
    \sup\limits_{\seq{g}{K}\in\mathcal{G}}\frac{1}{2 n}
    \sum_{i=1}^{n}
    \sigma_i \sum\limits_{y\in \mathcal{Y}}
    \tildeloss{g_y(x_i)}}
    \right\}\\
    &= K\Expected[\mathcal{S}][\mathcal{\sigma}]{
    \sup\limits_{g\in\mathcal{G}}\frac{1}{n}
    \sum_{i=1}^{n}
    \sigma_i \sum\limits_{y\in \mathcal{Y}}
    \tildeloss{g(x_i)}}\\
    &= K \Rad{\tildeloss{}\circ\mathcal{G}}
\end{align*}
The second equality from the last holds because $\sigma_{i}$ and $\alpha_{i}\sigma_{i}$ are drawn from the same probabilistic distribution.
For the second term, 
\begin{align*}
    \Expected[\mathcal{S}][\mathcal{\sigma}]{
    \sup\limits_{\seq{g}{K}\in\mathcal{G}}\frac{1}{n}
    \sum_{i=1}^{n}
    \sigma_i \sum\limits_{y\in \widetilde{Y}_i}
    \sum\limits_{y'\in \mathcal{Y}}\tildeloss{-g_{y'}(x_i)}}
    &\leq \sum\limits_{y\in\mathcal{Y}}
    \Expected[\mathcal{S}][\mathcal{\sigma}]{
    \sup\limits_{g_{y}\in\mathcal{G}}\frac{1}{n}
    \sum_{i=1}^{n}
    \sigma_i \widetilde{N}\tildeloss{-g_{y}(x_i)}}\\
    &= K \widetilde{N} 
    \Expected[\mathcal{S}][\mathcal{\sigma}]{
    \sup\limits_{g\in\mathcal{G}}\frac{1}{n}
    \sum_{i=1}^{n}
    \sigma_i \tildeloss{-g(x_i)}}\\
    &= K \widetilde{N} \Rad{\tildeloss{}\circ\mathcal{G}}
\end{align*}
Thus, 
\begin{align*}
    \Rad{\mathcal{H}_{\text{OVA}}}
    &\leq \frac{K^{2} + K\widetilde{N}}{K-1}\Rad{\tildeloss{}\circ\mathcal{G}}\\
    &\leq \frac{K(K + \widetilde{N})}{K-1}\Lip\Rad{\mathcal{G}}
\end{align*}
The second inequality holds due to $\Rad{\tildeloss{}\circ\mathcal{G}}\leq\Lip\Rad{\mathcal{G}}$ according to Talagrand's contraction lemma \cite{ledoux2013}.
\qed

Note that $\loss{z}-\loss{-z}=a$ is incorrectly assumed in \cite{ishida2017nips}, which causes miscalculation in proof of Lemma~\ref{lem:rademacher-bound}.

We further describe proof for pairwise classification.
Under the assumption that $h$ and $\CandLoss[\text{PC}]{}$ are equivalent,
\begin{align*}
    &\Rad{\mathcal{H}_{\text{PC}}}\\
    &= \Expected[\mathcal{S}][\mathcal{\sigma}]{
    \sup\limits_{h\in\mathcal{H}_{\text{PC}}}\frac{1}{n}
    \sum_{i=1}^{n}
    \sigma_i h\left(x_i, Y_i\right)
    }\\
    &=\Expected[\mathcal{S}][\mathcal{\sigma}]{
    \sup\limits_{\seq{g}{K}\in\mathcal{G}}\frac{1}{n}\sum_{i=1}^{n}
    \sigma_{i}\sum\limits_{y\in \widetilde{Y}_i}\tildeLoss[\text{PC}]{f(x_i), y}}\\
    &=\Expected[\mathcal{S}][\mathcal{\sigma}]{
    \sup\limits_{\seq{g}{K}\in\mathcal{G}}\frac{1}{n}\sum_{i=1}^{n}
    \sigma_{i}\sum\limits_{y\in \widetilde{Y}_i}\sum\limits_{y'\neq y}\tildeloss{g_{y}(x_i)-g_{y'}(x_i)}
    }\\
    &=\Expected[\mathcal{S}][\mathcal{\sigma}]{
    \sup\limits_{\seq{g}{K}\in\mathcal{G}}\frac{1}{n}\sum_{i=1}^{n}
    \sigma_{i}\sum\limits_{y\in \mathcal{Y}}\sum\limits_{y'\neq y}
    \tildeloss{g_{y}(x_i)-g_{y'}(x_i)}I(y\in\widetilde{Y}_{i})
    }\\
    &\leq\sum\limits_{y\in \mathcal{Y}}\sum\limits_{y'\neq y}
    \Expected[\mathcal{S}][\mathcal{\sigma}]{
    \sup\limits_{g_{y}, g{y'}\in\mathcal{G}}\frac{1}{2 n}\sum_{i=1}^{n}
    \sigma_{i}\tildeloss{g_{y}(x_i)-g_{y'}(x_i)}(\alpha_{i} + 1)
    }\\
    &\leq\sum\limits_{y\in \mathcal{Y}}\sum\limits_{y'\neq y}
    \Biggl\{
    \Expected[\mathcal{S}][\mathcal{\sigma}]{
    \sup\limits_{g_{y}, g{y'}\in\mathcal{G}}\frac{1}{2 n}\sum_{i=1}^{n}
    \alpha_{i}\sigma_{i}\tildeloss{g_{y}(x_i)-g_{y'}(x_i)}}\\
    &\hspace{1em}+\Expected[\mathcal{S}][\mathcal{\sigma}]{
    \sup\limits_{g_{y}, g{y'}\in\mathcal{G}}\frac{1}{2 n}\sum_{i=1}^{n}
    \sigma_{i}\tildeloss{g_{y}(x_i)-g_{y'}(x_i)}}
    \Biggr\}\\
    &\leq\sum\limits_{y\in \mathcal{Y}}\sum\limits_{y'\neq y}
    \Expected[\mathcal{S}][\mathcal{\sigma}]{
    \sup\limits_{g_{y}, g_{y'}\in\mathcal{G}}\frac{1}{n}\sum_{i=1}^{n}
    \sigma_{i}\tildeloss{g_{y}(x_i)-g_{y'}(x_i)}}
\end{align*}
Let we define $\mathcal{G}_{g_{y}, g_{y'}}:=\left\{x\mapsto g_{y}(x)-g_{y'}(x)|g_{y}, g_{y'}\in\mathcal{G}\right\}$, then:
\begin{align*}
    &\Expected[\mathcal{S}][\mathcal{\sigma}]{
    \sup\limits_{g_{y}, g_{y'}\in\mathcal{G}}\frac{1}{n}\sum_{i=1}^{n}
    \sigma_{i}\tildeloss{g_{y}(x_i)-g_{y}(x_i)}}\\
    &= \Rad{\tildeloss{}\circ\mathcal{G}_{g_{y}, g_{y'}}}\\
    &\leq \Lip\Rad{\mathcal{G}_{g_{y}, g_{y'}}}\\
    &=\Lip\Expected[\mathcal{S}][\mathcal{\sigma}]{
    \sup\limits_{g_{y}, g_{y'}\in\mathcal{G}}
    \frac{1}{n}\sum_{i=1}^{n}\sigma_{i}\left(g_{y}(x_i)-g_{y'}(x_i)\right)}\\
    &\leq\Lip\Expected[\mathcal{S}][\mathcal{\sigma}]{
    \sup\limits_{g_{y}\in\mathcal{G}}
    \frac{1}{n}\sum_{i=1}^{n}\sigma_{i}g_{y}(x_i)}
    +\Lip\Expected[\mathcal{S}][\mathcal{\sigma}]{
    \sup\limits_{g_{y'}\in\mathcal{G}}
    \frac{1}{n}\sum_{i=1}^{n}(-\sigma_{i})g_{y'}(x_i)}\\
    &=2\Lip\Rad{\mathcal{G}}
\end{align*}
The third equality holds because $\sigma_{i}$ and $-\sigma_{i}$ are drawn from the same probabilistic distribution.
Then,
\begin{align*}
    \Rad{\mathcal{H}_{\text{PC}}}
    &\leq 2 K(K-1)\Lip\Rad{\mathcal{G}}
\end{align*}
\qed

\section{Proof of Theorem \ref{thm:errbound}}

We only describe proof for one-versus-all classification; proof for pairwise classification is similar.
We substitute the $j$~th data $(x_j, Y_j)$ in $\mathcal{S}$ with any data $(x'_j, Y'_j)$, and define the data set as $\mathcal{S'}$.
Let a set of empirical discrimination functions and empirical risk for $\mathcal{S'}$ be $\mathcal{G'}:=\left\{g'\right\}$ and $\hat{R'}(f)$ respectively.
Then the following formulation holds due to Lemma~\ref{lem:candloss-norm}.
\begin{align*}
    &\sup\limits_{\seq{g}{K}\in\mathcal{G}}\left(\hat{R}(f)-R(f)\right)
    - \sup\limits_{\seq{g'}{K}\in\mathcal{G'}}\left(\hat{R'}(f)-R(f)\right)\\
    &\leq\frac{K-1}{n(K-N)}
    \sup_{\seq{g}{K}\in\mathcal{G}}\inf_{\seq{g'}{K}\in\mathcal{G'}}\left\{
    \left(\CandLoss[\text{OVA}]{f(x_j), Y_j}
    - \CandLoss[\text{OVA}]{f({x'}_j), {Y'}_j}\right)
    \right\}\\
    &\leq\frac{K-1}{n(K-N)}
    \|\CandLoss[\text{OVA}]{}\|_{\infty}\\
    &\leq\frac{K\widetilde{N}}{n(K-N)}
\end{align*}
According to McDiarmid's inequality\cite{mcdiarmid1989}, for any integer $\delta>0$ the following formulation holds with a probability at least $1-\delta/2$.
\begin{align*}
    \sup\limits_{\seq{g}{K}}\left(\hat{R}(f)-R(f)\right)
    -\Expected{\sup\limits_{\seq{g}{K}}\left(\hat{R}(f)-R(f)\right)}
    &\leq\frac{K\widetilde{N}}{2(K-N)}\sqrt{\frac{2\ln(2/\delta)}{n}}
\end{align*}
Let $\mathcal{S}':=\left\{(x'_{i}, Y'_{i})\right\}_{i=1}^{n}$ be any dataset where each data is drawn from the data-generation probability model $\CandP{x, Y}$.
Due to $R(f)=\Expected{\hat{R}(f)}$, 
\begin{align*}
    &\Expected{\sup\limits_{\seq{g}{K}\in\mathcal{G}}\left(\hat{R}(f)-R(f)\right)}\\
    &=\frac{K-1}{K-N}\Expected[\mathcal{S}]{\sup_{\seq{g}{K}\in\mathcal{G}}\left(
    \frac{1}{n}\sum_{i=1}^{n}\CandLoss[\text{OVA}]{f(x_i), Y_i}
    -\Expected[\mathcal{S}']{\frac{1}{n}\sum_{i=1}^{n}\CandLoss[\text{OVA}]{f(x'_i), Y'_i}}\right)}\\
    &\leq\frac{K-1}{K-N}\Expected[\mathcal{S}][\mathcal{S}']{
    \sup\limits_{\seq{g}{K}\in\mathcal{G}}\left(
    \frac{1}{n}\sum_{i=1}^{n}\CandLoss[\text{OVA}]{f(x_i), Y_i}
    -\frac{1}{n}\sum_{i=1}^{n}(\CandLoss[\text{OVA}]{f(x'_i), Y'_i}
    \right)}\\
    &=\frac{K-1}{K-N}\Expected[\mathcal{S}][\mathcal{S}'][\mathcal{\sigma}]{
    \sup\limits_{\seq{g}{K}\in\mathcal{G}}\left(
    \frac{1}{n}\sum_{i=1}^{n}\sigma_{i}\CandLoss[\text{OVA}]{f(x_i), Y_i}
    +\frac{1}{n}\sum_{i=1}^{n}(-\sigma_{i})\CandLoss[\text{OVA}]{f(x'_i), Y'_i}
    \right)}\\
    &\leq\frac{2(K-1)}{K-N}\Expected[\mathcal{S}][\mathcal{\sigma}]{
    \sup\limits_{\seq{g}{K}\in\mathcal{G}}
    \frac{1}{n}\sum_{i=1}^{n}\sigma_{i}\CandLoss[\text{OVA}]{f(x_i), Y_i}
    }\\
    &=\frac{2(K-1)}{K-N}\Rad{\mathcal{H}_{\text{OVA}}}\\
    &\leq\frac{2 K (K + \widetilde{N})}{K-N}\Lip\Rad{\mathcal{G}}
\end{align*}
The second equality holds because $\CandLoss{f(x_{i}, Y_{i})}$ and $\sigma_{i}\CandLoss{f(x_{i}, Y_{i})}$ are drawn from the same probabilistic distribution; similar for $\CandLoss{f(x'_{i}, Y'_{i})}$.
The last inequality holds due to Lemma~\ref{lem:rademacher-bound}.
$\hat{R}(\hat{f})\leq\hat{R}(f^{*})$ holds according to \eqref{eq:error}, thus,
\begin{align*}
    \err
    &=\left(\hat{R}(\hat{f})-\hat{R}(f^{*})\right)
    +\left(R(\hat{f})-\hat{R}(\hat{f})\right)
    +\left(\hat{R}(f^{*})-R(f^{*})\right)\\
    &\leq 2\sup\limits_{\seq{g}{K}\in\mathcal{G}}
    \left|\hat{R}(f)-R(f)\right|\\
    &\leq\frac{4 K (K + \widetilde{N})}{K-N}\Lip\Rad{\mathcal{G}}
    + \frac{K\widetilde{N}}{K-N}\sqrt{\frac{2\ln(2/\delta)}{n}}
\end{align*}
\qed

\end{document}